\let\oldsum\sum
\let\oldprod\prod
\renewcommand{\sum}{\textstyle\oldsum}
\renewcommand{\prod}{\textstyle\oldprod}
\newtheorem{remark}{Remark}
\newtheorem{theorem}{Theorem}
\renewenvironment{proof}{{\itshape \quad Proof:\;\,}}{\hfill $\square$\par}
\pgfplotsset{compat=1.18}
\def\BibTeX{{\rm B\kern-.05em{\sc i\kern-.025em b}\kern-.08em
    T\kern-.1667em\lower.7ex\hbox{E}\kern-.125emX}}
\definecolor{abstractbg}{rgb}{0.89804,0.94510,0.83137}
\begin{document}

\thispagestyle{empty} 
\begin{center}
\vspace*{2cm} 
\textbf{\Large IEEE Copyright Notice}
\vspace{1cm}

\begin{minipage}{0.9\textwidth}
\footnotesize
© 2025 IEEE. Personal use of this material is permitted. Permission from IEEE must be obtained for all other uses, in any current or future media, including reprinting/republishing this material for advertising or promotional purposes, creating new collective works, for resale or redistribution to servers or lists, or reuse of any copyrighted component of this work in other works.
\end{minipage}

\vspace{1cm}
DOI: \texttt{10.1109/jsen.2025.3582282} 
\end{center}

\title{An Adaptive Sliding Window Estimator for Positioning of Unmanned Aerial Vehicle Using a Single Anchor}
\author{Kaiwen Xiong\orcidlink{0009-0000-4575-0663}, Sijia Chen\orcidlink{0000-0002-4081-0134}, Wei Dong\orcidlink{0000-0003-2640-1585}
\thanks{This work was supported in part by the Shanghai Rising-Star Program under Grant 22QA1404400  and in part by the National Natural Science Foundation of China Grant 51975348.}
\thanks{Kaiwen Xiong, Sijia Chen and Wei Dong are with the State Key Laboratory of Mechanical System and Vibration, School of Mechanical Engineering, Shanghai Jiaotong University, Shanghai, 200240, China. Corresponding author: Wei Dong, E-mail: dr.dongwei@sjtu.edu.cn.}}

\IEEEtitleabstractindextext{
\fcolorbox{abstractbg}{abstractbg}{
\begin{minipage}{\textwidth}
\begin{wrapfigure}[22]{r}{3in}
\includegraphics[width=3in]{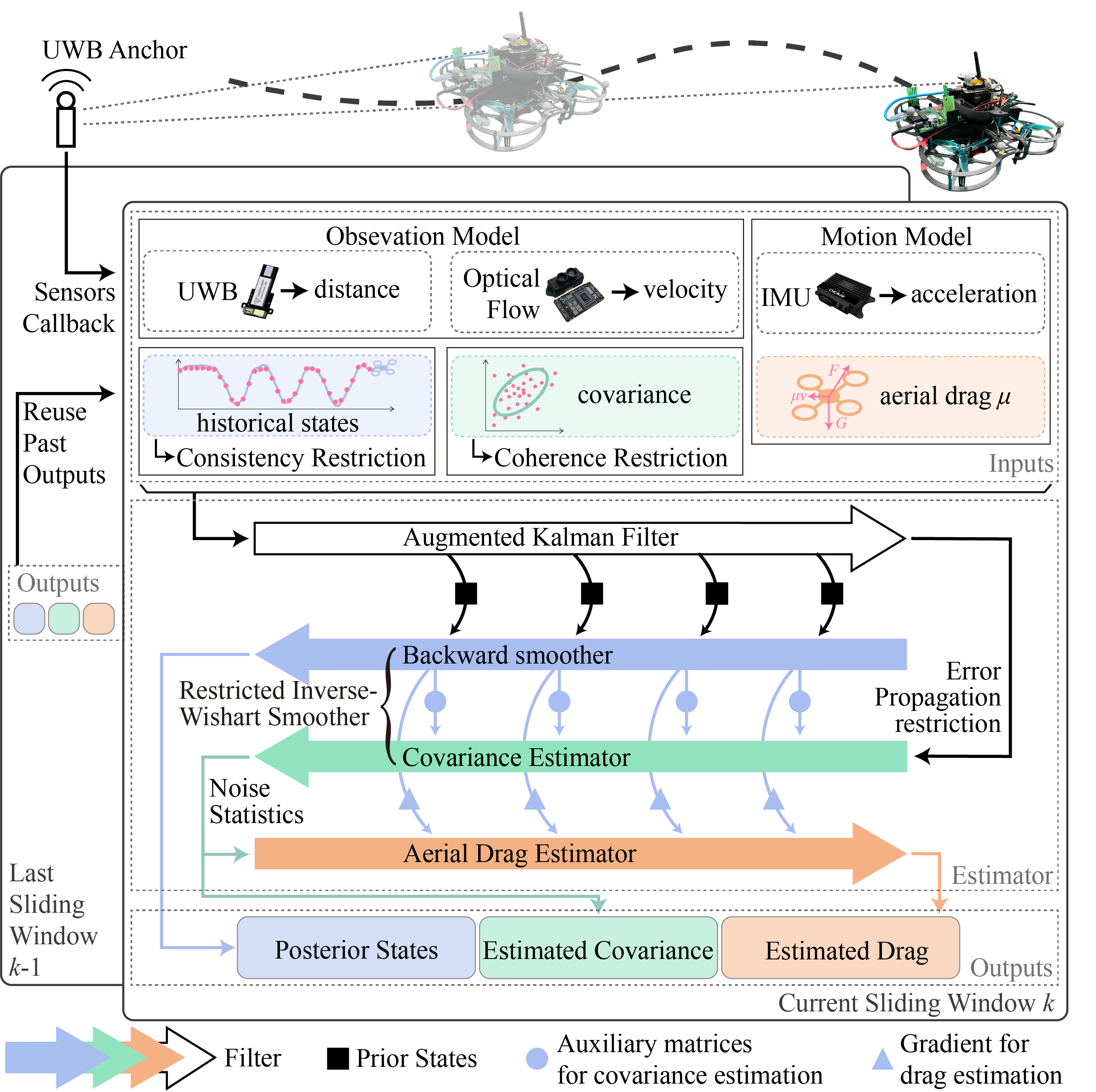}
\end{wrapfigure}
\begin{abstract}
Localization using a single range anchor combined with onboard optical-inertial odometry offers a lightweight solution that provides multidimensional measurements for the positioning of unmanned aerial vehicles.
Unfortunately, the performance of such lightweight sensors varies with the dynamic environment, and the fidelity of the dynamic model is also severely affected by environmental aerial flow.
To address this challenge, we propose an adaptive sliding window estimator equipped with an estimation reliability evaluator, where the states, noise covariance matrices and aerial drag are estimated simultaneously.
The aerial drag effects are first evaluated based on posterior states and covariance.
Then, an augmented Kalman filter is designed to pre-process multidimensional measurements and inherit historical information.
Subsequently, an inverse-Wishart smoother is employed to estimate posterior states and covariance matrices.
To further suppress potential divergence, a reliability evaluator is devised to infer estimation errors.
We further determine the fidelity of each sensor based on the error propagation.
Extensive experiments are conducted in both standard and harsh environments, demonstrating the adaptability and robustness of the proposed method.
The root mean square error reaches 0.15 m, outperforming the state-of-the-art approach.
Real-world close-loop control experiments are additionally performed to verify the estimator's competence in practical application.
\end{abstract}

\begin{IEEEkeywords}
Position Estimation, Single Anchor, Adaptive Noise Covariance, Aerial Drag Estimation
\end{IEEEkeywords}
\end{minipage}}}

\maketitle

\section{Introduction}
\label{secIntro}
\IEEEPARstart{u}{nmanned} aerial vehicles (UAVs) are increasingly employed to replace manpower in harsh environments \cite{8877114}.
To achieve autonomous execution and efficient operation, high-precision positioning and navigation of UAVs are urgently demanded \cite{9919741}.
Unfortunately, in challenging scenarios such as cave exploration \cite{10318478}, facilities maintenance \cite{9615423}, environmental interference and rapid changes underscore the need for further refinement of positioning methods.\par
Typically, to achieve high-precision positioning, many works leverage peripheral equipment, including GNSS \cite{10026812,9739235}, motion capture system \cite{7840347}, pre-established beacon system \cite{8869587} and assistant robot \cite{9650840}.
However, GNSS navigation often becomes infeasible due to signal obstruction, as highlighted by several studies \cite{10190013}, while deploying additional peripheral equipment may be limited by resource constraints in numerous applications.
Hence, onboard sensors have garnered significant interest.
Two primary categories include visual odometry (VO), which encompasses cameras \cite{10295773, 10132859} and lidar \cite{8949363}, albeit requiring substantial computational resources.
The alternative category comprises non-visual methods such as ultra-wideband radio (UWB) \cite{9627174}, inertial measurement units (IMU), or a combination of both \cite{10083062}, which are lightweight but usually have limited information and inadequate positioning precision in complex situations.
Under this premise, integrated systems like UWB-VO \cite{10318307}, visual-inertial odometry \cite{10242349}, and vision-IMU-UWB odometry \cite{10131904,9350155} effectively mitigate drawbacks from both categories.
To handle environmental interference and achieve better positioning precision, it is critical to distinguish varying degrees of sensor failure and properly exploit return data according to sensor fidelity. \par
Adaptive algorithms are developed to tackle this problem. 
They concurrently estimate states and noise's statistical properties based on optimal estimators. 
Among various methods, one category is empirical adjustment incorporating deliberately designed factors \cite{Jiang2021} and fuzzy system \cite{8997181}. The other category explores statistical approaches such as exploiting moment estimation of sample covariance \cite{9077705,10154170} and special prior distributions \cite{9748015,9944196}. 
The methods above demonstrate their competence in slowly changing environments. 
However, in harsh scenarios, current estimators struggle to assess estimation errors synchronously, rendering them susceptible to faulty data due to interference. 
Furthermore, these estimators can hardly adapt to sudden environmental changes, which increases the risk of potential divergence, underscoring the urgent need for more robust solutions.\par
Moreover, the variation in aerial drag presents another facet of environmental change. 
Accounting for aerial drag is crucial in the motion modeling of UAVs \cite{9033633}, significantly enhancing the precision of position estimation \cite{10184165}. 
Many studies have studied the estimation of aerial drag coefficients \cite{9292966, 10341818, 9013062}, but due to the complexity of aerodynamics and limited computational resources, more efficient methods are deserved in practical application.\par
To address the above problems, we elaborately develop a novel method called \textit{restricted adaptive sliding window estimator (RASWE)}, which simultaneously operates state estimation, covariance matrix adaptation, and aerial drag adjustment.
Considering sensor configuration, an optical flow (OF) sensor and IMU are employed to compensate for the limited operation range and single data dimension of UWB.
Then, the dynamics are reformulated considering aerial drag effects.
An augmented Kalman filter pre-processes all data from the sensor system and reuses historical output states under coherence restriction.
The processed data are sent to the backward smoother, yielding posterior states utilized to estimate noise covariance matrices under the control of error propagation restriction.
A cost function is formulated based on the idea that the dynamic model with proper aerial drag matrix derives the prior state that approaches the posterior state, which helps to adjust the aerial drag matrix via gradient descent.
All outputs within one sliding window are reused as inputs at the next timestep.
Finally, experiments are conducted in both common and harsh environments to demonstrate the adaptability of the proposed method. 
Additionally, we design various ablation experiments to validate the effectiveness of each design and perform real-world close-loop control experiments to further verify the ability of the proposed estimator.\par
The major contributions of our work are threefold:
1)
We propose an augmented Kalman filter to estimate the position of UAV using a single anchor efficiently.
2)
We elaborately design an error propagation matrix as an online error inspector to assess estimation performance and develop a restricted inverse-Wishart smoother to derive posterior states and adjust noise covariance synchronously.
3)
We formulate a cost function based on the idea that the prior state obtained by dynamics should approach the posterior state, and we construct a novel aerial drag estimator via the gradient descent method to enhance estimation performance.
\newpage
\section{Preliminaries}\label{secPre}
\subsection{Notations}
In this article, an $m$ by $n$ matrix, $\boldsymbol{A}\in\mathbb{R}^{m\times n}$, 
is referred to by capital bold letter, whereas vector of dimension $n$, $\boldsymbol{x}\in\mathbb{R}^n$, is denoted by a lowercase bold letter. 
Identity and zero matrix of dimension $m\times n$ are represented as $\boldsymbol{I}_{m\times n}$ and $\boldsymbol{0}_{m\times n}$ respectively, and to simplify, the square ones of size $n$ are abbreviated as $\boldsymbol{I}_{n}$ and $\boldsymbol{0}_{n}$ respectively. 
If their dimension is not emphasized, the index will be omitted.
A quantity with index, $(\cdot)_k$, indicates it is at timestep $k$, and the transposition and inverse of matrix $(\cdot)$ are $(\cdot)^\mathrm{T}$ and $(\cdot)^{-1}$ respectively.
Also, prior and posterior quantities usually are decorated with superscripts, $\check{(\cdot)}$ and $\hat{(\cdot)}$ respectively.
\subsection{Basic Concepts}
In this work, we consider a discrete-time, linear and time-varying model as follows:
    \begin{align}
        \text{motion model: }&\boldsymbol{x}_k=\boldsymbol{A}_{k-1}\boldsymbol{x}_{k-1}+\boldsymbol{u}_k+\boldsymbol{w}_k \label{22basicMODEL1}\\
        \text{observation model: }&\boldsymbol{y}_k=\boldsymbol{C}_{k}\boldsymbol{x}_{k}+\boldsymbol{n}_k \label{22basicMODEL2}
    \end{align} 
where $k$ is the discrete-time index with its maximum $K$. 
The state vector $\boldsymbol{x}_k=[\boldsymbol{p}_k^\mathrm{T},\boldsymbol{v}_k^\mathrm{T}]^\mathrm{T}\in\mathbb{R}^6$ consists of 3-dimensional position and velocity, which are the focus of our estimation.
The measurements from the sensor system are $\boldsymbol{y}_k\in\mathbb{R}^{m}$, where $m$ is the size of it.
The transition matrix $\boldsymbol{A}_{k-1}\in\mathbb{R}^{6\times6}$ and the observation matrix $\boldsymbol{C}_{k}\in\mathbb{R}^{m\times6}$ are already known based on historical knowledge.
The net input is $\boldsymbol{u}_k$, and it is also written as $\boldsymbol{u}_k=\boldsymbol{B}_k\boldsymbol{i}_k$, where $\boldsymbol{B}_k\in\mathbb{R}^{6\times s_i}$ is a known control matrix and $\boldsymbol{i}_k\in\mathbb{R}^{s_i}$ is the original input of size $s_i$.\par
The processing and observation noises are formulated as unbiased Gaussian distribution, i.e. $\boldsymbol{w}_k\sim \mathbf{N}(\boldsymbol{0},\boldsymbol{Q}_k)$ and $\boldsymbol{n}_k\sim \mathbf{N}(\boldsymbol{0},\boldsymbol{R}_k)$.
Also, they are assumed to be uncorrelated.
This means $\mathbb{E}[\boldsymbol{w}_i\boldsymbol{w}_j^\mathrm{T}]=\delta_{ij}\boldsymbol{Q}_i$, $\mathbb{E}[\boldsymbol{n}_i\boldsymbol{n}_j^\mathrm{T}]=\delta_{ij}\boldsymbol{R}_i$ and $[\boldsymbol{w}_i\boldsymbol{n}_j^\mathrm{T}]=\boldsymbol{0}$,
where $\mathbb{E}[\cdot]$ denotes the mathematical expectation and $\delta_{ij}$ is Kronecker-$\delta$ function.
And $\mathbf{N}(\boldsymbol{\nu},\boldsymbol{C})$ represents multivariate Gaussian distribution with mean $\boldsymbol{\nu}$ and covariance $\boldsymbol{C}$.
Also in our work, the state $\boldsymbol{x}_k$ is assumed to be Gaussian, i.e. $\boldsymbol{x}_k\sim\mathbf{N}(\check{\boldsymbol{x}},\check{\boldsymbol{P}})$ or $\boldsymbol{x}_k\sim\mathbf{N}(\hat{\boldsymbol{x}},\hat{\boldsymbol{P}})$.Moreover, we say a random $n \times n$ matrix follows inverse-Wishart distribution, $\mathbf{IW}(\sigma,\boldsymbol{\Sigma})$, with degree of freedom (DoF) $\sigma$ and scale matrix parameter $\boldsymbol{\Sigma}$.
\section{Methodology}\label{secMethod}
\subsection{Dynamics and Observation Model}
Various interference from harsh environments poses a daunting challenge to sensor configuration.
The practical application requires both light-weighting and resilience to interference.
Under this premise, we deliberately design a sensor system consisting of a single UWB anchor, IMU and OF sensor.
The different working conditions for each chosen sensor ensure that a single type of interference, such as dim light or long distance, only affects a specific sensor, leaving the others safe to function effectively.\par
The IMU collects normalized linear acceleration $\boldsymbol{a}$ and normalized quaternion $\boldsymbol{q}$.
The actual acceleration is obtained by $\boldsymbol{i}=g\boldsymbol{a}+[0,0,-g]^\mathrm{T}$,
where $g\simeq 9.8$ is gravitational acceleration and $\boldsymbol{R}$ is the spatial rotation matrix corresponding to $\boldsymbol{q}$ \cite{salamin1979application}.
By pre-integration, the dynamics are formulated as:
\begin{equation}
    \label{31motionModel}
    \begin{aligned}
        \boldsymbol{x}_{k}&=\begin{bmatrix}
            \boldsymbol{I}_3 & \mathrm{dt}\cdot\boldsymbol{I}_3\\
            \boldsymbol{0}_3 & \boldsymbol{I}_3-\mathrm{dt}\cdot\boldsymbol{\mu}\\
        \end{bmatrix}\boldsymbol{x}_{k-1}+\begin{bmatrix}
            \frac{1}{2}\mathrm{dt}^2\\
            \mathrm{dt}
        \end{bmatrix}\otimes\boldsymbol{i}_k+\mathbb{E}[\boldsymbol{w}_k]\\
        &=\boldsymbol{A}_{k-1}\boldsymbol{x}_{k-1}+\boldsymbol{u}_k
    \end{aligned}
\end{equation}
where $\otimes$ is Kronecker product and $\mathrm{dt}$ is the time between timestep $k-1$ and $k$.
The $\boldsymbol{\mu}\in\mathbb{R}^{3\times3}$ is aerial drag matrix.
It will be discussed thoroughly in Subsection~\ref{secMUE}, but before that, it is assumed to be already known. \par
Typically, UWB is a non-linear sensor. 
For the computational efficiency, it is linearized by:
\begin{equation}
    \label{31UWBlinear}
    {}^\text{UWB}y_k=\begin{bmatrix}
        \frac{\tilde{\boldsymbol{p}}_k^\mathrm{T}}{||\tilde{\boldsymbol{p}}_k||_2}&\boldsymbol{0}_{1\times3}
    \end{bmatrix}\boldsymbol{x}_{k}
\end{equation}
where $||\cdot||_2$ is the 2-norm (a.k.a. Euclidean norm).
The $\tilde{\boldsymbol{p}}_k$ is an approximate position calculated via \eqref{31motionModel}:
\begin{equation}
    \label{31approP}
    \tilde{\boldsymbol{p}}_k=\begin{bmatrix}
        \boldsymbol{I}_3&\boldsymbol{0}_3
    \end{bmatrix}(\boldsymbol{A}_{k-1}\check{\boldsymbol{x}}_{k-1}+\boldsymbol{u}_k)
\end{equation}
where $\check{\boldsymbol{x}}_{k-1}$ is the output of the last sliding window, and thereby it is known here.
The UWB applies a global constraint on our estimator, which suppresses the error accumulation and overcomes the most common interference, vision degradation.\par
\begin{remark}
    For convenience, we set the origin at the same place as the UWB anchor.
    Nevertheless, in practical application, since the UWB anchor is fixed w.r.t. world frame, its position can be pre-calibrated easily.
    If we choose a different origin, by subtracting UWB's position $\boldsymbol{p}_\text{UWB}$ from both sides of \eqref{31approP} and adding $\tilde{\boldsymbol{p}}_k^\mathrm{T}\boldsymbol{p}_\text{UWB}/||\tilde{\boldsymbol{p}}_k||_2$ to ${}^\text{UWB}y_k$, it achieves the same effect as \eqref{31UWBlinear}. 
\end{remark}\par
The velocity ${}^\text{OF}\boldsymbol{y}_k = [{}^\text{H}\boldsymbol{y}_k^\mathrm{T},{}^\text{V}\boldsymbol{y}_k^\mathrm{T}]^\mathrm{T}$ measured by OF unit has no range limitation, which compensates for the disadvantage of UWB.
Typically, the OF method only obtains 2-dimensional velocity, which is horizontal velocity ${}^\text{H}\boldsymbol{y}_k$.
The vertical velocity ${}^\text{V}\boldsymbol{y}_k$ is obtained by calculating the height difference from a laser unit between each timestep.
The observation model is formulated as follows:
\begin{equation}
    \label{31observationModel}
    \boldsymbol{y}_k=\begin{bmatrix}
        {}^\text{UWB}y_k\\{}^\text{OF}\boldsymbol{y}_k
    \end{bmatrix}=\begin{bmatrix}
            \tilde{\boldsymbol{p}}_k^\mathrm{T}/||\tilde{\boldsymbol{p}}_k||_2 & \boldsymbol{0}_{1\times 3}\\
            \boldsymbol{0}_3 & \boldsymbol{I}_3
        \end{bmatrix}\boldsymbol{x}_{k}=\boldsymbol{C}_k\boldsymbol{x}_{k}.
\end{equation}\par
This system's advantage is that the observation model's position and velocity parts compensate for each other's weaknesses, which guarantees the robustness of measurements and enhances adaptability in changeable environments.
\subsection{Augmented Kalman Filter}\label{sec:AKF}
This part is the basis of the proposed estimator, which pre-processes all measurements collected by sensors and information calculated previously by the estimator itself.
We exert Kalman filter (KF) during a sliding window.
Since every sliding window overlaps each other, we redundantly calculate the posterior state at each timestep many times, as is displayed in Fig.~\ref{fig:SW}.
To fully exploit this redundancy, we use an interesting trick to consider our KF as some other kind of sensor.
Thus, the measurements, as well as corresponding observation and covariance matrices, are augmented:
\begin{subequations}
\label{32yCR}
\begin{align}
    \label{32yCR1}
    \tilde{\boldsymbol{y}}_j&=[\boldsymbol{y}_j^\mathrm{T}, \check{\boldsymbol{x}}_j^\mathrm{T}]^\mathrm{T},\;j=1,2,\cdots,k_w-1\\
    \label{32yCR2}
    \tilde{\boldsymbol{C}}_j&=[\boldsymbol{C}_j^\mathrm{T}, \boldsymbol{I}_6]^\mathrm{T},\;j=1,2,\cdots,k_w-1\\
    \label{32yCR3}
    \tilde{\boldsymbol{R}}_j&=\mathrm{diag}(\boldsymbol{R}_j,\check{\boldsymbol{P}}_j),\;j=1,2,\cdots,k_w-1
\end{align}
\end{subequations}
where $\mathrm{diag}(\boldsymbol{A},\boldsymbol{B})$ is blockwise diagonal matrix with main diagonal blocks $\boldsymbol{A}$ and $\boldsymbol{B}$.
If $j=k_w$, the $\tilde{\boldsymbol{y}}_j$, $\tilde{\boldsymbol{C}}_j$ and $\tilde{\boldsymbol{R}}_j$ remain the same as their original form.\par
\begin{figure}[tpb]
    \centering
    \includegraphics[width=0.95\linewidth]{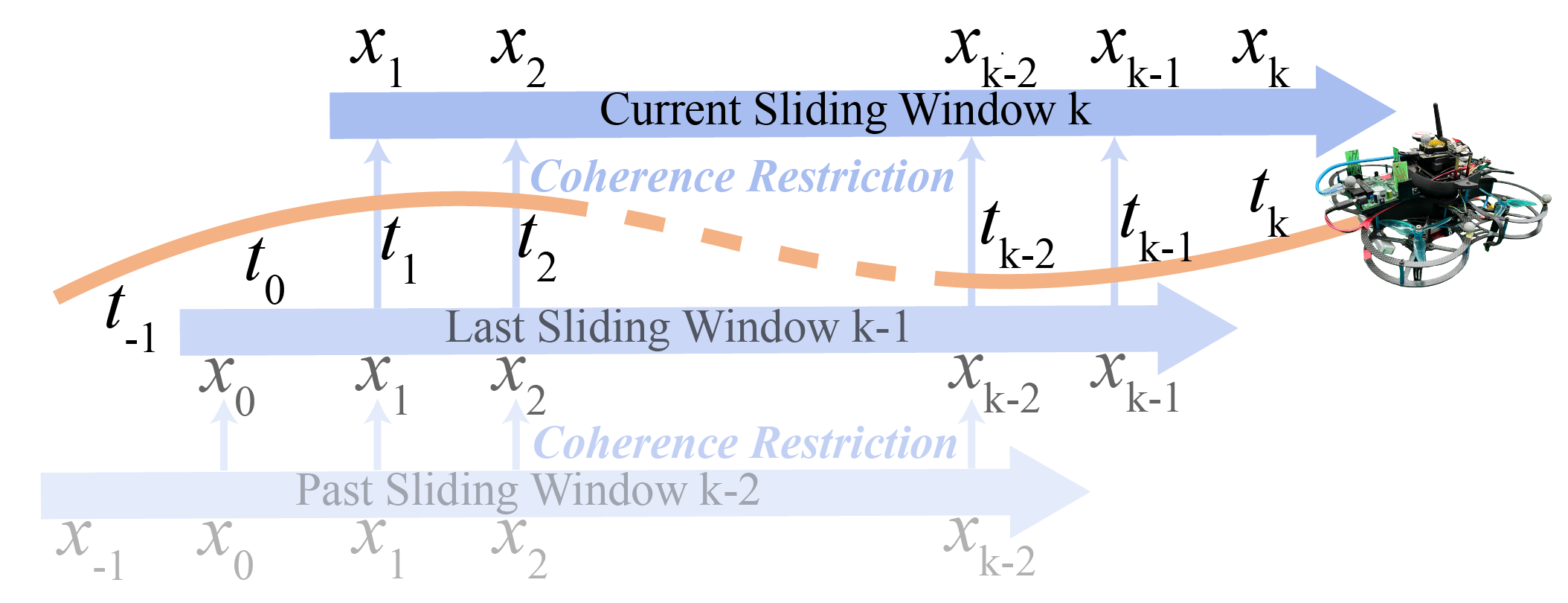}
    \vspace{-7pt}
    \caption{Coherence restriction to exploit redundancy emerging from overlapping sliding windows.}
    \label{fig:SW}
    \vspace{-2pt}
\end{figure}
In this way, we construct our augmented KF as follows:
\begin{subequations}
    \label{32KFforward}
    \begin{align}
        \check{\boldsymbol{P}}_{f,j}&=\boldsymbol{A}_{j-1}\hat{\boldsymbol{P}}_{f,j-1}\boldsymbol{A}_{j-1}^\mathrm{T}+\boldsymbol{Q}_j\label{32KFforward1}\\
        \check{\boldsymbol{x}}_{f,j}&=\boldsymbol{A}_{j-1}\hat{\boldsymbol{x}}_{f,j-1}+\boldsymbol{u}_{j}\label{32KFforward2}\\
        \boldsymbol{K}_j&=\check{\boldsymbol{P}}_{f,j}\tilde{\boldsymbol{C}}_j^\mathrm{T}(\tilde{\boldsymbol{C}}_j\check{\boldsymbol{P}}_{f,j}\tilde{\boldsymbol{C}}_j^\mathrm{T}+\tilde{\boldsymbol{R}}_j)^{-1}\label{32KFforward3}\\
        \hat{\boldsymbol{P}}_{f,j}&=(\boldsymbol{I}-\boldsymbol{K}_j\tilde{\boldsymbol{C}}_j)\check{\boldsymbol{P}}_{f,j}\label{32KFforward4}\\
        \hat{\boldsymbol{x}}_{f,j}&=\check{\boldsymbol{x}}_{f,j}+\boldsymbol{K}_j(\tilde{\boldsymbol{y}}_j-\tilde{\boldsymbol{C}}_j\check{\boldsymbol{x}}_{f,j})\label{32KFforward5}
    \end{align}
\end{subequations}
where $j=1,2,\cdots,k_w$.
The filter is initialized by $\hat{\boldsymbol{P}}_{f,0}=\check{\boldsymbol{P}}_0$ and $\hat{\boldsymbol{x}}_{f,0}=\check{\boldsymbol{x}}_0$.
The label $f$ means the $\boldsymbol{x}$ and $\boldsymbol{P}$ are only the intermediate temporary variables used to support the forward process, not the estimated posterior state.
We will derive posterior states at the beginning of Subsection~\ref{secIWS}.\par
Mathematically \cite{roboEst}, the above KF can be reformulated into the maximum a posteriori (MAP) solving process, where the covariance matrices are treated as weights to balance the previous estimation, the motion modal and the observation model.
The following section will introduce how to adjust these weights online.
The necessity of this overlapping sliding window form is further discussed in Subsection~\ref{secAna}.\par
The augmentation in \eqref{32yCR} can also be explained as coherence restriction, forcing the estimated state at the same timestep of nearby sliding windows to be as close as possible, which enhances the numeric stability of the calculation.
Additionally, all covariance matrices $\boldsymbol{Q}$'s and $\boldsymbol{R}$'s are set to be the same in each sliding window, which is called consistency restriction.\par
Compared to the standard Kalman filter, the proposed method has two main distinctions.
First, it extends the conventional step-by-step filtering framework into a sliding window-based MAP estimation, where a Kalman-style update is employed within each window to balance estimation accuracy and computational efficiency.
Second, it augments the estimation formulation by explicitly incorporating past state estimates within the window, thereby reducing numerical inconsistency and enhancing temporal smoothness.
This design reflects a key structural distinction between our method and conventional Kalman filter variations.\par
\subsection{Restricted Inverse-Wishart Smoother}\label{secIWS}
In this subsection, we combine the backward smoother and covariance estimator into a single compact algorithm, restricted inverse-Wishart smoother.
We first conduct a backward smoother as follows:
\begin{subequations}
    \label{33backward}
    \begin{align}
        \boldsymbol{G}_j&=\hat{\boldsymbol{P}}_{f,j-1}\boldsymbol{A}_{j-1}^\mathrm{T}\check{\boldsymbol{P}}_{f,j}^{-1}\label{33backward1}\\
        \hat{\boldsymbol{x}}_{j-1}&=\hat{\boldsymbol{x}}_{f,j-1}+\boldsymbol{G}_j(\hat{\boldsymbol{x}}_j-\check{\boldsymbol{x}}_{f,j})\label{33backward2}\\
        \hat{\boldsymbol{P}}_{j-1}&=\hat{\boldsymbol{P}}_{f,j-1}+\boldsymbol{G}_j(\hat{\boldsymbol{P}}_j-\check{\boldsymbol{P}}_{f,j})\boldsymbol{G}_j^\mathrm{T}\label{33backward3}
    \end{align}
\end{subequations}
where $j=k_w,\cdots,2,1$, with the initial step: $\hat{\boldsymbol{P}}_{k_w}=\hat{\boldsymbol{P}}_{f,k_w}$, $\hat{\boldsymbol{x}}_{k_w}=\hat{\boldsymbol{x}}_{f,k_w}$.
Through this, we obtain our deserved posterior states, which are both the output of the estimator and the basis for the estimation of covariance matrices and aerial drag effect.\par
As for noise covariance matrices, we first assume they obey inverse-Wishart distribution:
\begin{subequations}
    \label{33QRlast}
    \begin{align}
        p(\hat{\boldsymbol{Q}}_{k-1}|\hat{\boldsymbol{x}}_{0:k_w-1})&=\mathbf{IW}(\hat{\boldsymbol{Q}}_{k-1};\hat{\phi}_{k-1},\hat{\boldsymbol{\Phi}}_{k-1})\label{33QRlast1}\\
        p(\hat{\boldsymbol{R}}_{k-1}|\hat{\boldsymbol{x}}_{0:k_w-1})&=\mathbf{IW}(\hat{\boldsymbol{R}}_{k-1};\hat{\psi}_{k-1},\hat{\boldsymbol{\Psi}}_{k-1})\label{33QRlast2}
    \end{align}
\end{subequations}
which are known currently since its label $k-1$ indicates it is the posterior estimation obtained by the last sliding window. 
The parameters are chosen empirically for the initial step, and hence, the corresponding probability density functions (PDFs) are also known.
In the following equations of this subsection, we use $n=6$ and $m=4$ to represent the dimension of state and measurements, respectively, for a consistent form as inverse-Wishart distribution. \par
According to knowledge of probability theory \cite{gelman2013bayesian}, if the columns of the sample $\boldsymbol{X}=\begin{bmatrix}
    \boldsymbol{x}_1,\boldsymbol{x}_2,\cdots,\boldsymbol{x}_K
\end{bmatrix}$ are independent and identically distributed $n$-dimensional Gaussian variables conforming to $\mathbf{N}(\boldsymbol{0},\boldsymbol{C})$, with $C\sim\mathbf{IW}(\sigma,\boldsymbol{\Sigma})$, then the conditional PDF is $p(\boldsymbol{C}|\boldsymbol{X})=\mathbf{IW}(\boldsymbol{C};\sigma+K,\boldsymbol{\Sigma}+\boldsymbol{X}\boldsymbol{X}^\mathrm{T})$.
This is the conjugation nature of inverse-Wishart distribution, and to exploit this, we formulate two auxiliary matrices:
\begin{subequations}
    \label{33QRauxiliary}
    \begin{align}
        \tilde{\boldsymbol{\Phi}}_j=&\;\mathbb{E}[\boldsymbol{e}_{1,j}\boldsymbol{e}_{1,j}^\mathrm{T}]
        =\hat{\boldsymbol{P}}_j-\boldsymbol{A}_{j-1}\boldsymbol{G}_j\hat{\boldsymbol{P}}_j
        -\boldsymbol{G}_j\hat{\boldsymbol{P}}_j\boldsymbol{A}_{j-1}^\mathrm{T}\notag\\
        &\;+\boldsymbol{A}_{j-1}\hat{\boldsymbol{P}}_{j-1}\boldsymbol{A}_{j-1}^\mathrm{T}+\boldsymbol{e}_{1,j}\boldsymbol{e}_{1,j}^\mathrm{T}\label{33QRauxiliary1}\\
        \tilde{\boldsymbol{\Psi}}_j=&\;\mathbb{E}[\boldsymbol{e}_{2,j}\boldsymbol{e}_{2,j}^\mathrm{T}]
        =\boldsymbol{C}_j\hat{\boldsymbol{P}}_j\boldsymbol{C}_j^\mathrm{T}+\boldsymbol{e}_{2,j}\boldsymbol{e}_{2,j}^\mathrm{T}\label{33QRauxiliary2}
    \end{align}
\end{subequations}
where the error terms are $\boldsymbol{e}_{1,j}=\hat{\boldsymbol{x}}_j-\boldsymbol{A}_{j-1}\hat{\boldsymbol{x}}_{j-1}-\boldsymbol{u}_j$ and $\boldsymbol{e}_{2,j}=\boldsymbol{y}_j-\boldsymbol{C}_j\hat{\boldsymbol{x}}_j$.
By applying the conjugation nature, we obtain the posterior noise covariance matrices as follows:
\begin{subequations}
    \label{33QRpost}
    \begin{align}
        \hat{\phi}_k&=w_1(\hat{\phi}_{k-1}-n-1)+n+1+w_2 k_w\label{33QRpost1}\\
        \hat{\psi}_k&=w_1(\hat{\psi}_{k-1}-m-1)+m+1+w_2 k_w\label{33QRpost2}\\
        \hat{\boldsymbol{\Phi}}_k&=w_1\hat{\boldsymbol{\Phi}}_{k-1}+w_2\sum_{j=1}^{k_w}\tilde{\boldsymbol{\Phi}}_j\label{33QRpost3}\\
        \hat{\boldsymbol{\Psi}}_k&=w_1\hat{\boldsymbol{\Psi}}_{k-1}+w_2\sum_{j=1}^{k_w}\tilde{\boldsymbol{\Psi}}_j\label{33QRpost4}
    \end{align}
\end{subequations}
where $w_1,w_2\in[0,1]$ are weights to balance the features extracted during the previous and current sliding windows.\par
To further suppress potential divergence led by malfunctioning sensors, we apply an additional weight to the updating process of observation noise covariance.
The sum of its auxiliary matrices is calculated in iterative form:
\begin{equation}
    \label{33Riter}
    \sum_{j=1}^{k_j}\tilde{\boldsymbol{\Psi}}_j=w_3\left( \sum_{j=1}^{k_j-1}\tilde{\boldsymbol{\Psi}}_j+\tilde{\boldsymbol{\Psi}}_{k_j} \right)
\end{equation}
where $k_j=1,2,\cdots,k_w$ and $w_3\in(0,1)$. \par
To find proper weights, we need to evaluate the error during the calculation process. Assuming the initial state is biased as $\hat{\boldsymbol{x}}_{f,0}=\bar{\boldsymbol{x}}_0+\delta\boldsymbol{x}_0$, and the final state is $\hat{\boldsymbol{x}}_{f,k_w}=\bar{\boldsymbol{x}}_{k_w}+\delta\boldsymbol{x}_{k_w}$. By iteratively conducting the augmented Kalman filter, the error propagation relation is obtained:
\begin{equation}
        \label{33errpro}
        \delta\boldsymbol{x}_{k_w}=\left(\prod_{j=1}^{k_w}(\boldsymbol{I}-\boldsymbol{K}_j\tilde{\boldsymbol{C}}_j)\boldsymbol{A}_{j-1}\right)\delta\boldsymbol{x}_0=\boldsymbol{E}\delta\boldsymbol{x}_0.
\end{equation}
Hence, we exploit the error propagation matrix $\boldsymbol{E}$ in two ways:
\vspace{-15pt}
\begin{subequations}
    \label{33exploitE}
    \begin{align}
        \text{average trace: }&\bar{\lambda} = \text{tr}(\boldsymbol{E})/6 \label{33exploitE1}\\
        \text{reduced determinant: }&\varrho=\sqrt[6]{|\boldsymbol{E}|} \label{33exploitE2}
    \end{align} 
\end{subequations}
where $\text{tr}(\cdot)$ is the trace of square matrix $(\cdot)$.\par
For $w_1$ and $w_2$, if $\bar{\lambda}$ is larger than a threshold $\lambda_0\in(0,1]$, this means the estimation is too bad to be used in covariance adjustment.
And we set $w_1=1$ and $w_2=0$, so that $\hat{\boldsymbol{Q}}_k$ and $\hat{\boldsymbol{R}}_k$ remain unchanged.
However, if $\bar{\lambda} < \lambda_0$, we can adjust covariance matrices, and the weights are calculated as $w_1=1-f_1\bar{\lambda}$ and $w_2=1-f_1+f_1\bar{\lambda}$, where $f_1\in(0,1)$ is a factor to control how fast the weights change.
As for $w_3$, we set $w_3=f_2+\varrho/f_2$, where $f_2\in(0,1)$ is another factor to control change speed.
This is the error propagation restriction, whose necessity and effectiveness will be thoroughly discussed in Subsection~\ref{secRes}.
In this way, the deserved posterior PDFs are obtained.
The posterior noise covariance matrices are calculated as mathematical expectations:
\begin{subequations}
    \label{33QRmean}
    \begin{align}
        \boldsymbol{Q}_k&=\mathbb{E}[\hat{\boldsymbol{Q}}_k]=(\hat{\phi}_k-n-1)^{-1}\hat{\boldsymbol{\Phi}}_k\label{33QRmean1}\\
        \bar{\boldsymbol{R}}_k&=\mathbb{E}[\hat{\boldsymbol{R}}_k]=(\hat{\psi}_k-m-1)^{-1}\hat{\boldsymbol{\Psi}}_k.\label{33QRmean2}
    \end{align}
\end{subequations}
\par
In practice, the message from the sensor may be lost, or the data quality may be poor sometimes, whereas the adaptive adjustment cannot catch up with the fast change. To handle this, a boolean amendment is included to represent the work condition of sensors:
\begin{equation}
    \label{33bool}
    \boldsymbol{S}_{k,i}=\begin{cases}
        \boldsymbol{I}_s,&\text{the $i$-th sensor functions normally}\\
        \varepsilon\boldsymbol{I}_s,&\text{the $i$-th sensor malfunctions}
    \end{cases}
\end{equation}
where $s$ is the dimension of the output from the $i$-th sensor, and $\varepsilon$ is a very large number.
We have $\boldsymbol{S}_{k,1}\in\mathbb{R}$ and $\boldsymbol{S}_{k,2}\in\mathbb{R}^{3\times 3}$ represents the work condition of the UWB and OF sensor respectively.
The noise covariance matrices for practical application are finally obtained by combining \eqref{33QRmean2} and \eqref{33bool} together: $\boldsymbol{R}_k=\boldsymbol{S}_k\bar{\boldsymbol{R}}_k\boldsymbol{S}_k$, where $\boldsymbol{S}_k=\mathrm{diag}(\boldsymbol{S}_{k,1},\boldsymbol{S}_{k,2})$.
The algorithm of inverse-Wishart smoother in each sliding window is illustrated in Algorithm~\ref{algIWS}.
\begin{algorithm}[htbp]
    \caption{Restricted Inverse-Wishart Smoother at Timestep $k$ of $\mathcal{O}(k_w(n^3+n^2m+m^2))$}
    \label{algIWS}
    \KwIn{
    $\check{\boldsymbol{x}}_{k-k_w:k-1}$, $\check{\boldsymbol{P}}_{k-k_w:k-1}$, $\boldsymbol{A}_{k-k_w:k-1}$, $\boldsymbol{u}_{k-k_w+1:k}$, $\boldsymbol{y}_{k-k_w+1:k}$, $\boldsymbol{C}_{k-k_w+1:k}$, $\hat{\phi}_{k-1}$, $\hat{\boldsymbol{\Phi}}_{k-1}$, 
    $\hat{\psi}_{k-1}$, $\hat{\boldsymbol{\Psi}}_{k-1}$, $\boldsymbol{E}$, $\boldsymbol{S}_k$
    }
    \KwParam{
    $k_w$, $\lambda_0$, $f_1$, $f_2$, $\varepsilon$, $n=6$, $m=4$
    }
    \KwOut{
    $\hat{\boldsymbol{x}}_{0:k_w}$, $\hat{\boldsymbol{P}}_{0:k_w}$, $\hat{\phi}_{k}$, $\hat{\boldsymbol{\Phi}}_{k}$, 
    $\hat{\psi}_{k}$, $\hat{\boldsymbol{\Psi}}_{k}$, $\boldsymbol{Q}_k$, $\boldsymbol{R}_k$
    }
    $\bar{\lambda}\gets \frac{1}{6}\text{tr}(\boldsymbol{E})$ and $\varrho\gets\sqrt[6]{|\boldsymbol{E}|}$\;
    \eIf{$\bar{\lambda}\geq\lambda_0$}{
        $w_1\gets1$\ and $w_2\gets0$\;
    }{
        $w_1\gets1-f_1\bar{\lambda}$ and $w_2\gets1-f_1+f_1\bar{\lambda}$\;
    }
    $w_3\gets f_2+\varrho/f_2$\;
    $\hat{\boldsymbol{P}}_{k_w}\gets\hat{\boldsymbol{P}}_{f,k_w}$ and $\hat{\boldsymbol{x}}_{k_w}\gets\hat{\boldsymbol{x}}_{f,k_w}$\;
    \For{$j=k:-1:k-k_w+1$}{
        Conduct backward smoother iteration by \eqref{33backward}\;
        Calculate auxiliary matrices by \eqref{33QRauxiliary} and \eqref{33Riter}\;
    }
    Calculate shape and DoF parameters by \eqref{33QRpost}\;
    Calculate $\boldsymbol{Q}$ and $\bar{\boldsymbol{R}}$ by \eqref{33QRmean}\;
    $\boldsymbol{R}_k\gets\boldsymbol{S}_k\bar{\boldsymbol{R}}_k\boldsymbol{S}_k$\;
\end{algorithm}

\subsection{Aerial Drag Estimator}\label{secMUE}
As is discussed in the introduction, the aerial drag effect is essential, but its estimation is high-cost due to the complexity of the UAV's aerodynamics. 
Thanks to the proposed sensor system, we have enough data to indirectly evaluate the aerial drag effect. 
We develop a novel aerial drag estimator, which omits intricate aerodynamics and spares computational resources.
First, consider the velocity part of the prior state obtained by dynamics \eqref{31motionModel}: $\check{\boldsymbol{v}}_j=(\boldsymbol{I}_3-\mathrm{dt}\cdot\boldsymbol{\mu})\hat{\boldsymbol{v}}_{j-1}+\mathrm{dt}\cdot\boldsymbol{i}_k$, where $\check{\boldsymbol{v}}_j$ and $\hat{\boldsymbol{v}}_{j-1}$ are velocity part of $\check{\boldsymbol{x}}_j$ and $\hat{\boldsymbol{x}}_{j-1}$ respectively.
Also, with measurements from the sensor system, we obtain posterior $\hat{\boldsymbol{v}}_{j}$.
Our goal is to adjust $\boldsymbol{\mu}$ such that $\check{\boldsymbol{v}}_j$ approaches $\hat{\boldsymbol{v}}_{j}$, and hence we can formulate a cost function in quadratic form w.r.t. $\boldsymbol{\mu}$: $J_j=||\hat{\boldsymbol{v}}_{j}-\check{\boldsymbol{v}}_j||_2^2=(\hat{\boldsymbol{v}}_{j}-\check{\boldsymbol{v}}_j)^\mathrm{T}(\hat{\boldsymbol{v}}_{j}-\check{\boldsymbol{v}}_j)$.
By taking the derivative:
\begin{equation}
    \label{34derivative}
    \partial J_j/\partial \boldsymbol{\mu}=2\mathrm{dt}\left(\hat{\boldsymbol{v}}_j-(\boldsymbol{I}_3-\mathrm{dt}\boldsymbol{\mu})\hat{\boldsymbol{v}}_{j-1}-\mathrm{dt}\boldsymbol{i}_j\right)\hat{\boldsymbol{v}}_{j-1}^\mathrm{T},
\end{equation}
we can conduct a gradient descent method to amend $\boldsymbol{\mu}$:
\begin{equation}
    \label{34gradDown}
    \boldsymbol{\mu}\gets \boldsymbol{\mu}-\ell_k\cdot\partial J_j/\partial \boldsymbol{\mu}
\end{equation}
where $\ell_k$ is step length.\par
Since choosing the proper step length is a critical problem in gradient descent, we design a strategy to accomplish this.
The noise covariance matrices inherently reflect the deviation of dynamics and observation model w.r.t. actual situation. 
Hence, they are natural indicators to evaluate the performance of both models.
Under this premise, we calculate $\ell_k$ as follows:
\begin{equation}
    \label{34step}
    \ell_k=
        b_u-\frac{(b_u-b_l)\sqrt[4]{|\boldsymbol{R}_k|}}{\sqrt[6]{|\boldsymbol{Q}_k|}},\;\text{when}\; \sqrt[6]{|\boldsymbol{Q}_k|}>\sqrt[4]{|\boldsymbol{R}_k|}
\end{equation}
where $b_u$ and $b_l$ are upper and lower bounds for step length.
Intuitively, if $\sqrt[6]{|\boldsymbol{Q}_k|}\leq\sqrt[4]{|\boldsymbol{R}_k|}$, the sensors are less reliable than the dynamics. Hence, we should not amend $\boldsymbol{\mu}$ with measurements and set $\ell_k=0$.
On the other hand, once $\sqrt[6]{|\boldsymbol{Q}_k|}>\sqrt[4]{|\boldsymbol{R}_k|}$, we can start our adjustment.
And as $\sqrt[4]{|\boldsymbol{R}_k|}$ gets smaller, $\ell$ grows from its lower bound to upper bound. \par
Finally, we compress all three parts\textemdash augmented Kalman filter, restricted inverse-Wishart smoother and aerial drag estimator, into a single compact algorithm, restricted adaptive sliding window estimator (RASWE).
The Algorithm~\ref{algSATE} is the pseudo-code version of the proposed method RASWE.
\begin{algorithm}[htbp]
    \caption{Restricted Adaptive Sliding Window Estimator at Timestep $k$ of $\mathcal{O}(k_w(n^3+n^2m+nm^2+m^3))$}
    \label{algSATE}
    \KwIn{
    $\check{\boldsymbol{x}}_{k-k_w:k-1}$, $\check{\boldsymbol{P}}_{k-k_w:k-1}$, $\boldsymbol{\mu}$, $\mathrm{dt}$, $\boldsymbol{u}_{k-k_w+1:k}$, $\boldsymbol{y}_{k-k_w+1:k}$, $\hat{\phi}_{k-1}$, $\hat{\boldsymbol{\Phi}}_{k-1}$, 
    $\hat{\psi}_{k-1}$, $\hat{\boldsymbol{\Psi}}_{k-1}$, $\boldsymbol{S}_k$
    }
    \KwParam{
    $k_w$, $\lambda_0$, $f_1$, $f_2$, $\varepsilon$, $b_l$, $b_u$, $n=6$, $m=4$
    }
    \KwOut{
    $\hat{\boldsymbol{x}}_{k-k_w:k}$, $\hat{\phi}_{k}$, $\hat{\boldsymbol{\Phi}}_{k}$, 
    $\hat{\psi}_{k}$, $\hat{\boldsymbol{\Psi}}_{k}$, $\boldsymbol{\mu}$
    }
    Calculate $\boldsymbol{Q}_k$ and $\bar{\boldsymbol{R}}_k$ by \eqref{33QRmean}\;
    $\boldsymbol{R}_k\gets\boldsymbol{S}_k\bar{\boldsymbol{R}}_k\boldsymbol{S}_k$\;
    Calculate $\boldsymbol{A}_{k-k_w:k-1}$ by \eqref{31motionModel}\;
    Calculate $\boldsymbol{C}_{k-k_w+1:k}$ by \eqref{31approP} and \eqref{31observationModel}\;
    Augment $\boldsymbol{y}_{k-k_w+1:k}$, $\boldsymbol{C}_{k-k_w+1:k}$, $\boldsymbol{R}_k$ by \eqref{32yCR}\;
    $\hat{\boldsymbol{P}}_{f,0}\gets\check{\boldsymbol{P}}_0$ and $\hat{\boldsymbol{x}}_{f,0}\gets\check{\boldsymbol{x}}_0$\;
    \For{$j=k-k_w+1:+1:k$}{
    Conduct Kalman filter by \eqref{32KFforward}\;
    }
    Calculate $\boldsymbol{E}$ by \eqref{33errpro}\;
    Run Algorithm~\ref{algIWS}\;
    Calculate step length by \eqref{34step}\;
    \For{$j=k-k_w+1:+1:k$}{
    Calculate derivative by \eqref{34derivative}\;
    Update $\boldsymbol{\mu}$ by \eqref{34gradDown}\;
    }
\end{algorithm}

\subsection{Observability Analysis}\label{secAna}
The observability matrix of sensor system modeled by \eqref{31motionModel} and \eqref{31observationModel} is:
\begin{equation}
    \label{41obsMAT}
    \boldsymbol{\mathcal{O}}_k=\begin{bmatrix}
        \boldsymbol{C}_k^\mathrm{T} & \boldsymbol{A}_{k-1}^\mathrm{T}\boldsymbol{C}_k^\mathrm{T} & \cdots & \boldsymbol{A}_{k-1}^{5\mathrm{T}}\boldsymbol{C}_k^\mathrm{T}
    \end{bmatrix}^\mathrm{T}
\end{equation}
where $\boldsymbol{A}^{n\mathrm{T}}=(\boldsymbol{A}^n)^\mathrm{T}=(\boldsymbol{A}^\mathrm{T})^n$ for brevity. We introduce the first theorem to guarantee the observability.
\begin{theorem}\label{the1OBS}
    If $\boldsymbol{p}_k\neq\boldsymbol{0}$ and $\mathrm{dt}\cdot\boldsymbol{\mu}\neq\boldsymbol{I}$, then $\mathrm{rank}(\boldsymbol{\mathcal{O}}_k)=6$ and this system is observable.
\end{theorem}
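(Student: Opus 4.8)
The plan is to exploit the block upper-triangular structure of $\boldsymbol{A}_{k-1}$. Writing $\boldsymbol{M}=\boldsymbol{I}_3-\mathrm{dt}\cdot\boldsymbol{\mu}$, a short induction on the block form gives the closed expression
\[
\boldsymbol{A}_{k-1}^{n}=\begin{bmatrix}\boldsymbol{I}_3 & \mathrm{dt}\sum_{i=0}^{n-1}\boldsymbol{M}^{i}\\ \boldsymbol{0}_3 & \boldsymbol{M}^{n}\end{bmatrix}.
\]
Writing $\boldsymbol{c}^\mathrm{T}=\tilde{\boldsymbol{p}}_k^\mathrm{T}/||\tilde{\boldsymbol{p}}_k||_2$ for the unit UWB direction and substituting into \eqref{41obsMAT}, each block becomes
\[
\boldsymbol{C}_k\boldsymbol{A}_{k-1}^{n}=\begin{bmatrix}\boldsymbol{c}^\mathrm{T} & \mathrm{dt}\,\boldsymbol{c}^\mathrm{T}\sum_{i=0}^{n-1}\boldsymbol{M}^{i}\\ \boldsymbol{0} & \boldsymbol{M}^{n}\end{bmatrix},
\]
so $\boldsymbol{\mathcal{O}}_k$ separates cleanly into a position part (first three columns) and a velocity part (last three columns), and the rank question decouples accordingly.

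Next I would dispose of the velocity directions cheaply. The $n=0$ block contains the optical-flow rows $[\boldsymbol{0}\ \boldsymbol{I}_3]$, so the three velocity coordinates are directly observable and account for rank three. Subtracting suitable multiples of the optical-flow rows (which realise $\boldsymbol{I}_3,\boldsymbol{M},\boldsymbol{M}^2,\dots$ in their velocity block) from each UWB row annihilates the coupling term $\mathrm{dt}\,\boldsymbol{c}^\mathrm{T}\sum_i\boldsymbol{M}^i$, leaving every UWB row with position block $\boldsymbol{c}^\mathrm{T}$. The theorem then reduces to showing that these cleaned UWB rows supply the remaining rank three in the position columns.

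This position part is where I expect the main obstacle to lie. The hypothesis $\boldsymbol{p}_k\neq\boldsymbol{0}$ guarantees $\tilde{\boldsymbol{p}}_k\neq\boldsymbol{0}$, so that $\boldsymbol{c}$ is a well-defined nonzero direction, while $\mathrm{dt}\cdot\boldsymbol{\mu}\neq\boldsymbol{I}$ guarantees $\boldsymbol{M}\neq\boldsymbol{0}$ so the velocity-to-position coupling $\sum_i\boldsymbol{M}^i$ does not degenerate. The delicate point is that a single frozen direction $\boldsymbol{c}$ resolves only the radial position coordinate; the two remaining coordinates must be recovered from the fact that the linearisation point $\tilde{\boldsymbol{p}}_j$ of \eqref{31approP} is refreshed at every step and transported through the nontrivial dynamics $\boldsymbol{M}$, so that across the six stacked blocks the effective measurement directions rotate and jointly span $\mathbb{R}^3$ rather than collapsing to one radial line. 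I would therefore track how $\boldsymbol{c}$ is carried by successive powers of $\boldsymbol{M}$, and show that $\boldsymbol{p}_k\neq\boldsymbol{0}$ together with $\mathrm{dt}\cdot\boldsymbol{\mu}\neq\boldsymbol{I}$ excludes the degenerate purely radial configurations, thereby certifying a nonvanishing $6\times6$ minor. Verifying this spanning property—equivalently, ruling out the trajectories along which a single anchor cannot disambiguate tangential motion—is the step I expect to require the real work.
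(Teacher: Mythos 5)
Your reduction is sound as far as it goes, and it follows the same computational route as the paper (block powers of $\boldsymbol{A}_{k-1}$, separation into UWB and optical-flow rows); but the step you defer --- recovering the remaining rank three in the position columns from the cleaned UWB rows --- is not merely the hard part, it is a step that fails. After you eliminate the velocity coupling using the optical-flow rows, every one of the six UWB rows collapses to the \emph{same} row $[\boldsymbol{c}^\mathrm{T}\;\;\boldsymbol{0}_{1\times3}]$: the matrix $\boldsymbol{\mathcal{O}}_k$ in \eqref{41obsMAT} is built from one frozen $\boldsymbol{C}_k$ and powers of one $\boldsymbol{A}_{k-1}$, so only a single direction $\boldsymbol{c}$ appears anywhere in it. The rescue you propose --- that the linearization point $\tilde{\boldsymbol{p}}_j$ of \eqref{31approP} is refreshed at each step so the effective measurement directions rotate --- correctly describes the time-varying system over a window, but that information is simply not present in $\boldsymbol{\mathcal{O}}_k$ as defined. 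Your own row reduction therefore yields $\mathrm{rank}(\boldsymbol{\mathcal{O}}_k)=3+1=4$ whenever $\boldsymbol{c}\neq\boldsymbol{0}$, and no hypothesis on $\mathrm{dt}\cdot\boldsymbol{\mu}$ can raise this to six: the bound uses only the facts that the position blocks of the UWB rows coincide and that the compressed velocity blocks $\boldsymbol{c}^\mathrm{T}\boldsymbol{X}_n$ are six vectors in $\mathbb{R}^3$. (Sanity check: with $\boldsymbol{\mu}=\boldsymbol{0}$, which satisfies the theorem's hypotheses, $\boldsymbol{X}_n=n\boldsymbol{I}_3$ and the six UWB rows span only the two directions $[\boldsymbol{c}^\mathrm{T}\;\boldsymbol{0}]$ and $[\boldsymbol{0}\;\boldsymbol{c}^\mathrm{T}]$.)

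For comparison, the paper's proof selects exactly those six UWB rows as the $6\times6$ submatrix $\tilde{\boldsymbol{\mathcal{O}}}$ in \eqref{41OBSmat} and asserts that, because the matrices $\boldsymbol{X}_0,\dots,\boldsymbol{X}_5$ are independent when $\mathrm{dt}\cdot\boldsymbol{\mu}\neq\boldsymbol{I}$, all six rows are independent. That is precisely the inference you hesitated over, and your hesitation is justified: independence of the $\boldsymbol{X}_n$ as elements of $\mathbb{R}^{3\times3}$ does not survive compression by the single covector $\boldsymbol{c}^\mathrm{T}$. A relation $\sum_n\alpha_n[\boldsymbol{c}^\mathrm{T}\;\;\mathrm{dt}\,\boldsymbol{c}^\mathrm{T}\boldsymbol{X}_n]=\boldsymbol{0}$ imposes at most four independent constraints ($\sum_n\alpha_n=0$ plus three scalar equations in $\mathbb{R}^3$) on six unknowns, so $\tilde{\boldsymbol{\mathcal{O}}}$ admits at least a two-dimensional space of row relations and $\det\tilde{\boldsymbol{\mathcal{O}}}=0$ identically. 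In short, neither your proposal nor the paper's argument establishes $\mathrm{rank}(\boldsymbol{\mathcal{O}}_k)=6$; a correct statement would have to work with the time-varying observability Gramian over the window, where the rotating directions $\tilde{\boldsymbol{p}}_j$ genuinely enter, or fall back on the augmentation of Theorem~\ref{the2obs}. Physically this is the expected conclusion: a single range measurement plus velocity cannot instantaneously resolve the two position coordinates tangential to the anchor direction --- the degeneracy you anticipated is real, not merely delicate.
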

\begin{proof}
    We select a submatrix of $\boldsymbol{\mathcal{O}}$ that has the form:
    \begin{equation}
        \label{41OBSmat}
        \tilde{\boldsymbol{\mathcal{O}}}=\frac{\boldsymbol{I}_6\otimes \tilde{\boldsymbol{p}}^\mathrm{T}}{||\tilde{\boldsymbol{p}}||_2}\begin{bmatrix}
             \boldsymbol{I}_3 & \mathrm{dt}\cdot\boldsymbol{X}_0\\
             \vdots & \vdots\\
             \boldsymbol{I}_3 & \mathrm{dt}\cdot\boldsymbol{X}_5\\
        \end{bmatrix}\in\mathbb{R}^{6\times6}
    \end{equation}
    where the label $k$ is omitted without ambiguity.
    Iteratively, $\boldsymbol{X}_{i+1}=\boldsymbol{I}_3+\boldsymbol{X}_i(\boldsymbol{I}_3-\mathrm{dt}\boldsymbol{\mu})$ with $\boldsymbol{X}_0=\boldsymbol{0}_3$.
    One can verify that each $\boldsymbol{X}_i$ is independent of the others if $\mathrm{dt}\cdot\boldsymbol{\mu}\neq\boldsymbol{I}$.
    Hence every row of $\tilde{\boldsymbol{\mathcal{O}}}$ is independent of one another, which means $\mathrm{rank}(\tilde{\boldsymbol{\mathcal{O}}})=6$.
    Also, $\mathrm{rank}(\tilde{\boldsymbol{\mathcal{O}}})\leq\mathrm{rank}(\boldsymbol{\mathcal{O}}_k)\leq6$ because of submatrix's property.
    Hence, $\mathrm{rank}(\boldsymbol{\mathcal{O}}_k)=6$.
\end{proof}\par
The two conditions of Theorem~\ref {the1OBS} make sense.
If $\boldsymbol{p}_k\approx\boldsymbol{0}$, the UAV is almost at the same position as UWB; hence UWB fails.
However, in practical application, the UAV hardly hovers near the anchor but usually flies far away.
On the other hand, in common aerial surroundings, the drag effect is negligible and $\mathrm{dt}\cdot\boldsymbol{\mu}\ll\boldsymbol{I}$.
Furthermore, even if one of the conditions fails, this will not last for a long time, and the sliding window can handle this by evaluating a long trajectory, which is observable almost everywhere.
This introduces the second theorem.\par
\begin{theorem}\label{the2obs}
    The observability of the augmented model in \eqref{32KFforward} is guaranteed without any additional condition.
\end{theorem}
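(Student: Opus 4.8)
The plan is to prove observability by directly exhibiting a full-rank identity block inside the augmented observability matrix, which makes the rank argument unconditional. First I would form the observability matrix of the augmented model by substituting the augmented observation matrix $\tilde{\boldsymbol{C}}_k$ from \eqref{32yCR2} in place of $\boldsymbol{C}_k$ in the construction \eqref{41obsMAT}, namely
\begin{equation*}
\tilde{\boldsymbol{\mathcal{O}}}_k=\begin{bmatrix}
    \tilde{\boldsymbol{C}}_k^\mathrm{T} & \boldsymbol{A}_{k-1}^\mathrm{T}\tilde{\boldsymbol{C}}_k^\mathrm{T} & \cdots & \boldsymbol{A}_{k-1}^{5\mathrm{T}}\tilde{\boldsymbol{C}}_k^\mathrm{T}
\end{bmatrix}^\mathrm{T}.
\end{equation*}

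The key observation is that $\tilde{\boldsymbol{C}}_k=[\boldsymbol{C}_k^\mathrm{T},\boldsymbol{I}_6]^\mathrm{T}$ already contains the identity block $\boldsymbol{I}_6$. Since the zeroth-order term of $\tilde{\boldsymbol{\mathcal{O}}}_k$ is precisely $\tilde{\boldsymbol{C}}_k$ (multiplied by $\boldsymbol{A}_{k-1}^0=\boldsymbol{I}_6$), the matrix $\boldsymbol{I}_6$ appears as a row sub-block of $\tilde{\boldsymbol{\mathcal{O}}}_k$ untouched by any power of $\boldsymbol{A}_{k-1}$. I would then close the argument with a rank-bounding step identical in spirit to Theorem~\ref{the1OBS}: because $\boldsymbol{I}_6$ is a submatrix, $\mathrm{rank}(\tilde{\boldsymbol{\mathcal{O}}}_k)\geq\mathrm{rank}(\boldsymbol{I}_6)=6$, while the six-dimensional state forces $\mathrm{rank}(\tilde{\boldsymbol{\mathcal{O}}}_k)\leq6$; hence $\mathrm{rank}(\tilde{\boldsymbol{\mathcal{O}}}_k)=6$ and the augmented model is observable.

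There is essentially no obstacle to overcome, which is exactly the content of the theorem: the coherence restriction in \eqref{32yCR} treats the previous window's output $\check{\boldsymbol{x}}_j$ as a direct, full-state pseudo-measurement with observation matrix $\boldsymbol{I}_6$, so all six state components are observed outright. This is why the geometric hypotheses of Theorem~\ref{the1OBS}, namely $\boldsymbol{p}_k\neq\boldsymbol{0}$ and $\mathrm{dt}\cdot\boldsymbol{\mu}\neq\boldsymbol{I}$, are no longer needed: we never rely on the UWB direction vector nor on the drag-dependent propagation through powers of $\boldsymbol{A}_{k-1}$ to recover the state. The only point I would verify carefully is that the identity enters the observability matrix as a stand-alone block rather than being mixed into the $\boldsymbol{A}_{k-1}$ powers, and this follows immediately from the first block row corresponding to $\boldsymbol{A}_{k-1}^0=\boldsymbol{I}_6$.
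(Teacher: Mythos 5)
Your proof is correct and takes essentially the same route as the paper: the paper's own proof simply notes that since $\boldsymbol{C}_k$ is augmented with an identity matrix, the block $\boldsymbol{I}_6$ can be selected directly as a submatrix of the observability matrix, which immediately yields observability. Your version just makes explicit the rank-bounding step ($\mathrm{rank}(\tilde{\boldsymbol{\mathcal{O}}}_k)\geq 6$ from the submatrix, $\leq 6$ from the state dimension) that the paper leaves implicit.
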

\begin{proof}
    Since $\boldsymbol{C}_k$ is augmented with an identity matrix, we can directly select a submatrix $\boldsymbol{I}_6$ from the observability matrix.
    Hence, the augmented model is observable.
\end{proof}\par
The Theorem~\ref{the2obs} tells us that when observation at one timestep fails, we can directly leverage the previous estimation.
Only if the sensor degradation lasts longer than the sliding window length will the observability lose.
Notably, at the timestep $k_w$, there is no augmentation, and the observability relies on Theorem~\ref{the1OBS}.
\section{Experiments}\label{secExp}
\subsection{Experimental Setup}
In all experiments, a VICON\textsuperscript{\textregistered} motion capture system records ground truth at 200Hz.
A total of $14$ VICON cameras are engaged with sufficiently large overlapping perceptive areas to capture all trajectories.
The quadrotor unmanned aerial vehicle is steered under the control of the open-source Pixhawk\textsuperscript{\textregistered}.
The onboard computer is an NVIDIA\textsuperscript{\textregistered} NX computing mounted Intel Atom x7 (four cores, 1.8GHz).
The IMU (module CHCNAV CL-510) is utilized for acceleration measurements.
The Nooploop\textsuperscript{\textregistered} UWB (module LinkTrack LTPS\textsuperscript{\textregistered}) is selected for range measurement cooperated with a fixed anchor also equipped with the same UWB.
Typically, the measurement error of UWB is calibrated to a centimeter-level ($8.59$cm RMSE) but may degrade to a decimeter-level ($25.66$cm RMSE) in harsh environments due to interference.
The operation area of single anchor UWB is a ball with a decameter-level radius far larger than the room size.
Hence, the overlapping area of UWB and other sensors is sufficient to cover the whole testbed.
The optical flow sensor (module NiMing v4) is adopted for velocity measurements.
The communication system between sensors and UAV is built based on the Robot Operating System (ROS).
The IMU, UWB, and OF measurements are collected at 25Hz.\par
We conduct experiments with different velocities in common laboratory surroundings and record three datasets $\text{B}_1$ to $\text{B}_3$.
To evaluate our method in harsh environments, we repeat experiments with blinking light and a smoke generator, and record four datasets $\text{Y}_0$ to $\text{Y}_3$.
Since this section mainly focuses on state estimation performance, the UAV is controlled manually in all sever datasets.
To overcome jagged edges in data due to the simplified controller and poor physical structure lacking vibration damping, we apply a Savitzky–Golay filter with polynomial order 3 and frame length 9 to the estimated position.
The configuration of the testbed is illustrated in Fig.~\ref{fig:testbed}.
The obstacles and smoke shown in Subfigure (b) simulate an office or factory environment, while the blinking light mimics a streetlight.
The constructed environment provides a certain level of diversity and realism, offering approximation to the challenges encountered in practical scenarios.\par
The experiment parameters are displayed in Table~\ref{tab:param}, which are chosen via trial and error.
In particular, the factors, $f_1$ and $f_2$, should be small so that their corresponding weights approach $1$.
The bounds for step length should also be small to alleviate numerical oscillation.
As for the initial values on the right column, for example, we first empirically set $\boldsymbol{\mu}_0=0.1\cdot\boldsymbol{I}_3$ and find it approximately converges to the value in Table~\ref{tab:param}.
To validate covariance estimation, we set covariance parameters to be multiples of the identity matrix.
In \eqref{33bool}, if the ROS message from UWB is unfortunately empty, we judge that UWB is malfunctioning.
The OF unit directly returns a quality metric (integral from $0$ to $255$), and we assert the OF unit is malfunctioning in \eqref{33bool} if the quality is below $255$.\par
\begin{figure}[htbp]
    \centering
    \vspace{-7pt}
    \includegraphics[width=0.98\linewidth]{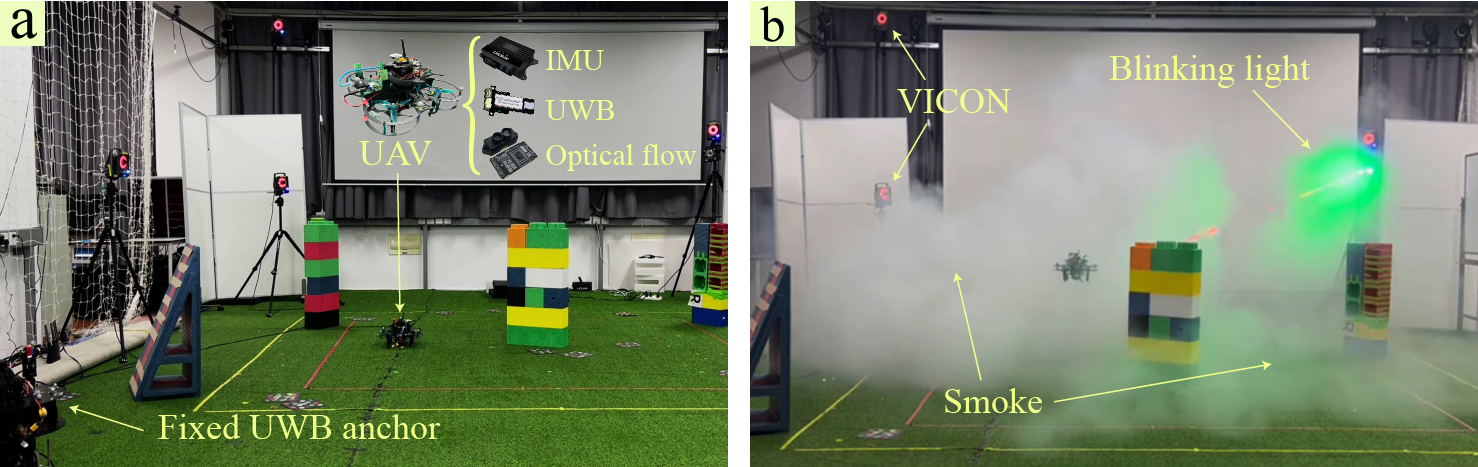}
    \vspace{-2pt}
    \caption{Testbeds of experiments. Subfigure (a) shows common surroundings; subfigure (b) is a smoky and blinking environment.}
    \label{fig:testbed}
\end{figure}
\begin{table}[htbp]
    \centering
    \setlength{\tabcolsep}{4.8mm}
    \caption{Parameters and Initial Values of Estimator in Experiments}
    \begin{tabularx}{0.98\linewidth}{clcl}
        \toprule
        \textbf{Item} & \textbf{Value} & \textbf{Item} & \textbf{Initial Value}\\
        \midrule
        $k_w$ & $10$                    & $\boldsymbol{\mu}_0$ & $\text{diag}(0.2,0.2,0.8)$\\
        $\lambda_0$ & $1\times 10^{-3}$ & $\check{\boldsymbol{P}}_0$ & $0.1\cdot\boldsymbol{I}_6$\\
        $f_1$ & $1\times 10^{-2}$       & $\check{\boldsymbol{x}}_0$ & VICON ground truth\\
        $f_2$ & $0.1$                   & $\hat{\boldsymbol{\Phi}}_0$ & $17\cdot\boldsymbol{I}_6$\\
        $b_u$ & $1\times 10^{-2}$       & $\hat{\phi}_0$ & $10$\\
        $b_l$ & $1\times 10^{-3}$       & $\hat{\boldsymbol{\Psi}}_0$ & $13\cdot\boldsymbol{I}_4$\\
        $\varepsilon$ & $1\times 10^{3}$& $\hat{\psi}_0$ & $8$\\
        \bottomrule
    \end{tabularx}
    \label{tab:param}
\end{table}
\vspace{-10pt}
\subsection{Evaluation of Overall Performance}
\vspace{-2pt}
To assess the effectiveness and stability of the proposed method, we compare it to the DWE in \cite{10947092}, which is equivalent to the combination of augmented Kalman filter \eqref{32KFforward} and backward smoother \eqref{33backward} without adjusting parameters like covariance and aerial drag matrices. 
The results are displayed in Table~\ref{tab:overall}, where we evaluate the root mean square error (RMSE) and the standard deviation (STD) along three axes.\par
\begin{table}[htbp]
    \centering
    \vspace{-10pt}
    \setlength{\tabcolsep}{2.52mm}
    \caption{Overall Performance Comparison between DWE and RASWE}
    \begin{tabularx}{0.98\linewidth}{cccccccc}
        \toprule
        \multirow{2}{*}{\makecell[c]{\textbf{Data}\\ \textbf{Set}}} & \multirow{2}{*}{\textbf{Method}} & \multicolumn{3}{c}{\textbf{RMSE} (meter)} & \multicolumn{3}{c}{\textbf{STD} (meter)} \\ \cline{3-8}
        && $x$ & $y$ & $z$ & $x$ & $y$ & $z$ \\
        \midrule
        \multirow{2}{*}{$\text{B}_1$} & DWE & 0.07 & 0.14 & 0.05 & 0.04 & 0.07 & 0.03 \\
        & RASWE & 0.07 & 0.14 & 0.03 & 0.04 & 0.09 & 0.02 \\
        \midrule
        \multirow{2}{*}{$\text{B}_2$} & DWE & 0.08 & 0.13 & 0.06 & 0.05 & 0.08 & 0.02 \\
        & RASWE & 0.09 & 0.11 & 0.05 & 0.06 & 0.07 & 0.02 \\
        \midrule
        \multirow{2}{*}{$\text{B}_3$} & DWE & 0.24 & 0.38 & 0.11 & 0.15 & 0.19 & 0.06 \\
        & RASWE & 0.14 & 0.11 & 0.05 & 0.10 & 0.07 & 0.03 \\
        \midrule
        \multirow{2}{*}{$\text{Y}_0$} & DWE & 0.32 & 0.62 & 0.09 & 0.20 & 0.38 & 0.05 \\ 
        & RASWE & 0.25 & 0.25 & 0.05 & 0.15 & 0.17 & 0.03 \\
        \midrule
        \multirow{2}{*}{$\text{Y}_1$} & DWE & 0.19 & 0.43 & 0.06 & 0.11 & 0.26 & 0.03 \\
        & RASWE & 0.23 & 0.22 & 0.08 & 0.13 & 0.14 & 0.06 \\
        \midrule
        \multirow{2}{*}{$\text{Y}_2$} & DWE & 0.17 & 0.35 & 0.10 & 0.10 & 0.26 & 0.08 \\
        & RASWE & 0.22 & 0.19 & 0.15 & 0.11 & 0.13 & 0.12 \\
        \midrule
        \multirow{2}{*}{$\text{Y}_3$} & DWE & 0.39 & 0.43 & 0.24 & 0.23 & 0.27 & 0.22 \\
        & RASWE & 0.37 & 0.38 & 0.06 & 0.25 & 0.25 & 0.04 \\
        \bottomrule
    \end{tabularx}
    \label{tab:overall}
\end{table}
From the results, it proves the proposed method RASWE outperforms DWE, especially in harsh environments, $\text{Y}_0$ to $\text{Y}_3$.
In common environments, however, RASWE may not always show its advantage since the noise covariance matrices and aerial drag matrix seldom change much.
The typical estimation results of dataset $\text{B}_2$ is illustrated in Fig.~\ref{fig:bag02xyz}.
Notably, RASWE reduces the error along the x-axis and y-axis to the same degree, whereas DWE always leaves one axis worse.
This is because noise covariance matrices are natural weights, and RASWE adjusts them accordingly to balance the error.
But without adaptive adjustment, the weights of two axes are empirically set to be the same, which conflicts with the fact and causes performance degradation of DWE. \par
\begin{figure}[htbp]
    \centering
    \vspace{-7pt}
    \includegraphics[width=0.98\linewidth]{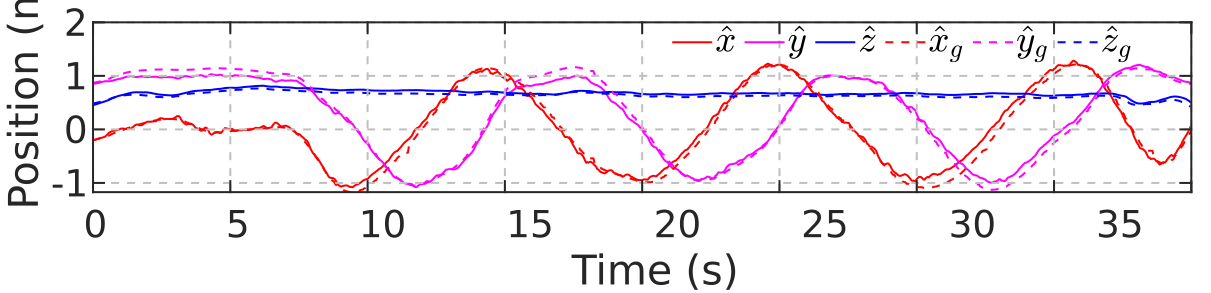}
    \vspace{-3pt}
    \caption{Typical position estimation results of RASWE on dataset $\text{B}_2$, where dotted line and label "$g$" are used to denote ground truth from VICON.}
    \label{fig:bag02xyz}
\end{figure}
As illustrated in subfigure (a) of Fig.~\ref{fig:bag02qr}, for the typical experiment $\text{B}_2$, the diagonal entry of the process noise covariance matrix corresponding to the y-axis is more significant than that corresponding to the x-axis, which is consistent with DWE result that RMSE along the y-axis is more significant than that along the x-axis.
Although the symmetry of aerodynamics indicates the weights corresponding to the x-axis and y-axis should be the same, the measurements from UWB cause this difference.
It only provides distance information, a weak constraint of only one dimension.
The limited dimension of information leads to the fact that, in practice, estimation along one axis usually performs worse.
On the other hand, it is the OF unit that plays a part in velocity estimation, while UWB does not affect this process.
Hence, we see $Q_{44}=Q_{55}$, which agrees with the symmetry of aerodynamics.
Thanks to our algorithm, the process noise covariance matrix becomes competent in revealing these points, which helps the estimator find more proper weights.\par
For observation noise covariance matrix in subfigure (b), we evaluate its shape parameter here instead of the covariance matrix itself since its change is slight and uneasy for intuitive understanding ought to error propagation restriction \eqref{33Riter}.
It shows that the entry corresponding to UWB is the largest one because this range of odometry is not always sufficiently accurate and can only provide a global reference.
As for the OF sensor, entries of the x-axis and y-axis are the same due to symmetry, and the entry of the z-axis is slightly larger than theirs since vertical velocity obtained by the difference method is less precise than horizontal velocity obtained by the OF method. 
In this way, RASWE makes full use of information from sensors and tries its best to avoid divergence led by malfunctioning sensors.\par
\begin{figure}[htbp]
    \centering
    \vspace{-4pt}
    \includegraphics[width=0.98\linewidth]{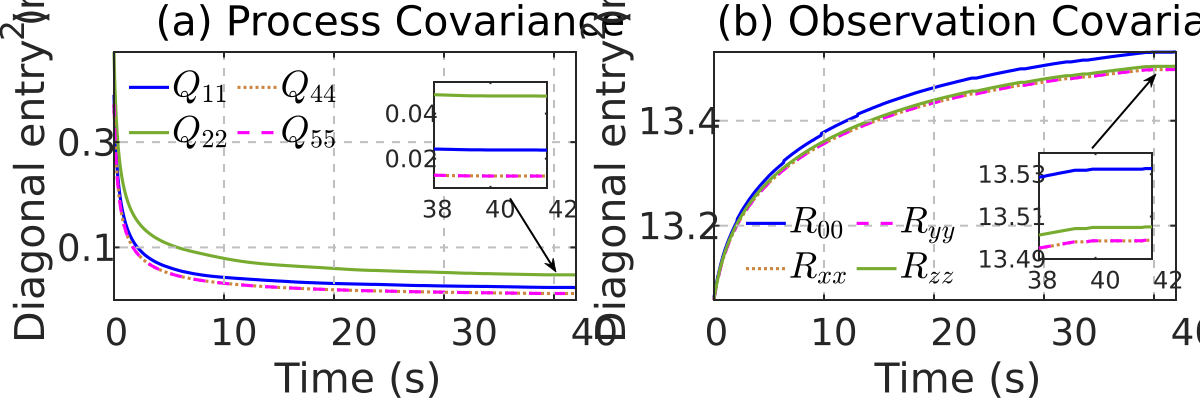}
    \vspace{-4pt}
    \caption{
    Diagonal entry changing tendency of noise covariance matrices (dataset: $\text{B}_2$).
    Subfigure (a) shows entries of process noise covariance matrices where $Q_{11}$, $Q_{22}$, $Q_{44}$ and $Q_{55}$ correspond to x-position, y-position, x-velocity and y-velocity respectively;
    subfigure (b) shows entries of the shape parameter of observation noise covariance where $R_{00}$ corresponds to UWB while the others correspond to three axes of the optical flow sensor.
    }
    \label{fig:bag02qr}
\end{figure}
We further compare RASWE with MST-SWVAKF \cite{9944196}, which also leverages inverse-Wishart distribution to estimate covariance matrices. Through a trial-and-error method, we set its parameters, $\beta=\delta=0.98$, $\varphi_i=1$, $\rho_i=1-e^{-4}$,$\kappa_{\text{upper}}=0.61$, $\kappa_{\text{lower}}=-2.89$, $L_{\text{max}}=9$, $L_{\text{min}}=4$, and other covariance parameters are kept the same as those in Table~\ref{tab:param}.
During experiments, we find MST-SWVAKF with fixed window length (abbr. "FWL") $L\equiv L_{\text{max}}$ performs better than adaptive window length.
The results are illustrated in Fig.~\ref{fig:spider}, where we evaluate three axes together using RMSE of Euclidean distance between estimated position and ground truth, $\sqrt{\frac{1}{n}\sum_{i=1}^{n}||\boldsymbol{p}_i-\boldsymbol{p}_{i,g}||^2_2}$, instead of examining three separated axes.
It shows that MST-SWVAKF with FWL has competitive results on some datasets but suffers a significant accuracy loss on others, especially in common environments $\text{B}_1$ to $\text{B}_3$.
The original one is even more likely to be misled by noise since frequently changed window length can greatly introduce numeric instability.
These results support the necessity and effectiveness of applying additional restrictions to RASWE, which will be discussed thoroughly with ablation experiments in the following subsection.
\begin{figure}[htbp]
    \centering
    \vspace{-4pt}
    \includegraphics[width=0.98\linewidth]{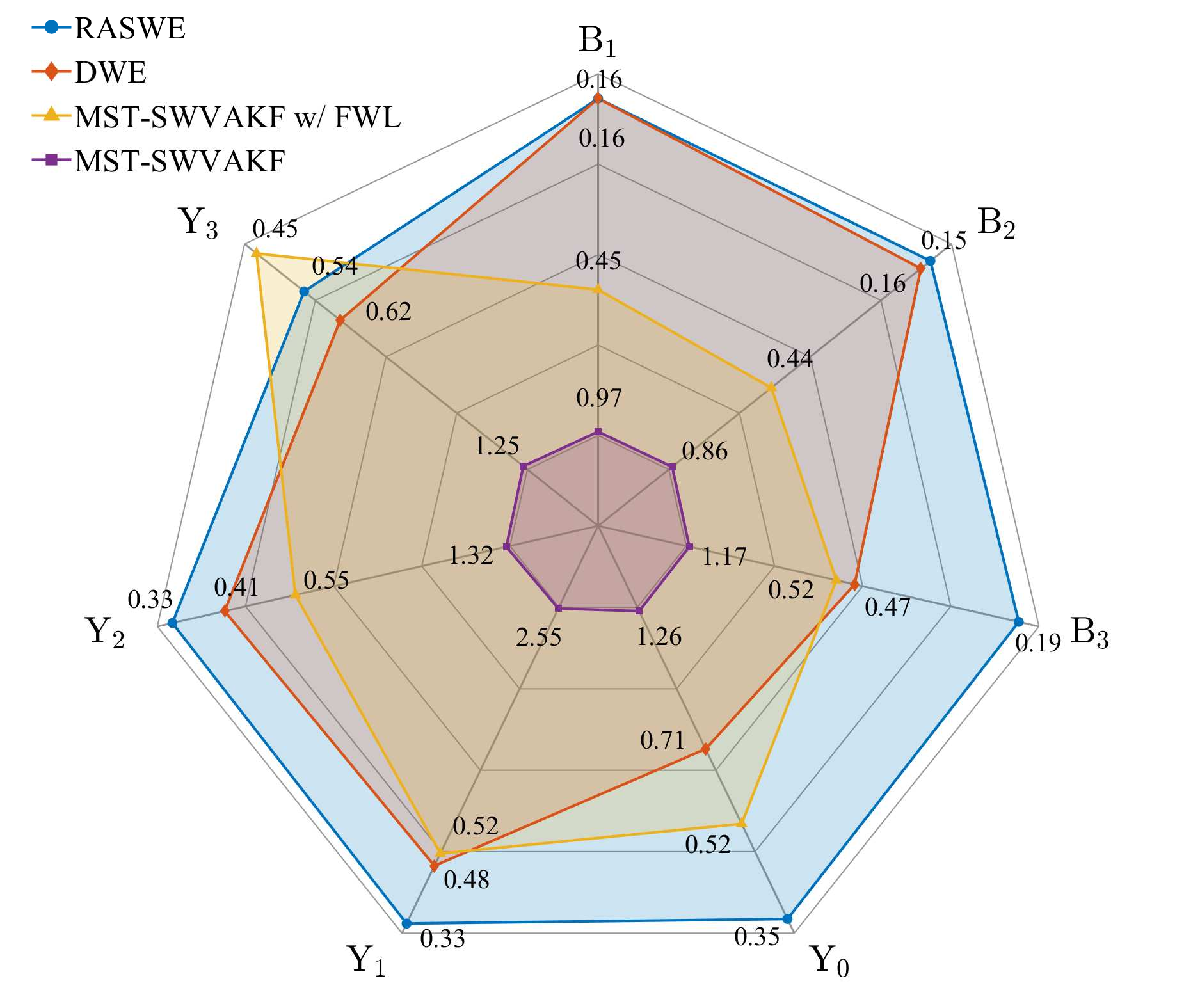}
    \vspace{-9pt}
    \caption{Overall performance comparison between RASWE, DWE and MST-SWVAKF on root mean square error of all datasets (unit: meter). Better performance corresponds to points plotted farther from the center; thus, a larger enclosed area indicates overall superior results.}
    \label{fig:spider}
\end{figure}
\subsection{Validation of Inherent Restrictions}\label{secRes}
As mentioned before, we apply several inherent restrictions to overcome sensor malfunction and suppress potential divergence.
In this subsection, we delve into the effects of them.
The metric here is the same RMSE as that in Fig.~\ref{fig:spider}.
All results are displayed in Table~\ref{tab:restriction}, where "none" denotes original RASWE the same as that in Table~\ref{tab:overall}.\par
\begin{table}[htbp]
    \centering
    \vspace{-10pt}
    \setlength{\tabcolsep}{2.5mm}
    \caption{RMSE comparison between Modified RASWEs with Certain Restriction Cancelled (Unit: meter)}
    \begin{tabularx}{0.98\linewidth}{cccccccc}
        \toprule
        \makecell[c]{\textbf{Cancelled}\\ \textbf{Restriction}} & $\textbf{B}_\textbf{1}$ & $\textbf{B}_\textbf{2}$ & $\textbf{B}_\textbf{3}$ & $\textbf{Y}_\textbf{0}$ & $\textbf{Y}_\textbf{1}$ & $\textbf{Y}_\textbf{2}$ & $\textbf{Y}_\textbf{3}$ \\
        \midrule
        None        & \textbf{0.16} & \textbf{0.15} & \textbf{0.19} & \textbf{0.35} & \textbf{0.33} & \textbf{0.33} & \textbf{0.54} \\
        ErrProp     & \textbf{0.16} & \textbf{0.15} & 0.20 & 0.45 & 0.45 & 0.43 & 0.59 \\
        coherence    & 1.27 & 1.04 & 1.49 & 1.50 & 1.44 & 1.55 & 1.78 \\
        Consist     & 0.94 & 1.04 & 4.04 & 1.57 & 2.28 & 1.23 & 1.83 \\
        \bottomrule
    \end{tabularx}
    \label{tab:restriction}
\end{table}
Among all restrictions, the most complicated one is error propagation restriction (abbr. ErrProp or EP), \eqref{33errpro} and \eqref{33exploitE}.
We cancel this restriction by setting three weights in \eqref{33QRpost} and \eqref{33Riter} to be 1, $w_1=w_2=w_3=1$.
From the results, we see that in common environments, whether to cancel it or not does not affect estimation precision much.
In harsh environments, however, the estimation deteriorates after the restriction is canceled.
This shows RASWE's advantage in handling harsh environments, as shown in Fig.~\ref{fig:yan0.6yc}.
Without error propagation restriction, the estimator can hardly track the position, especially when the UAV takes a sharp turn because it has been used for mild movement.\par
\begin{figure}[htbp]
    \centering
    \vspace{-4pt}
    \includegraphics[width=0.98\linewidth]{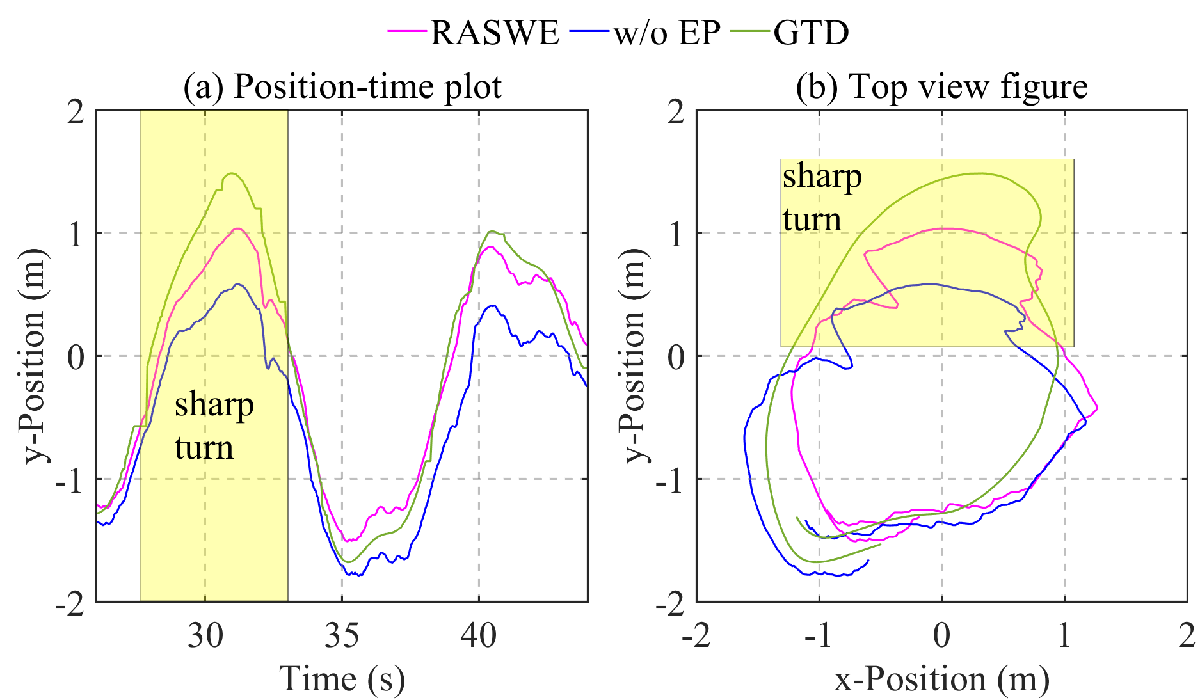}
    \vspace{-4pt}
    \caption{Comparison between estimations with and without error propagation restriction on the sharp turn of dataset $\text{Y}_2$, where "GTD" means ground truth data.}
    \label{fig:yan0.6yc}
\end{figure}
Another perspective is illustrated in Fig.~\ref{fig:yan0.4pvz}.
The estimation is initially interfered with by sudden rapid fluctuations, and RASWE gradually recovers from divergence. 
In contrast, the one without restriction is misled by malfunctioning sensors, leaving an estimation bias.
The root cause of position estimation divergence is the divergence of velocity estimation shown in subfigure (b).
The height laser of the OF unit mistakes smoke particles as ground and measures the wrong z-velocity, which misleads velocity estimation.
Thanks to error propagation restriction, velocity estimation converges again soon, which results in gradual convergence of position estimation through integral relation.\par
\begin{figure}[htbp]
    \centering
    \vspace{-4pt}
    \includegraphics[width=0.98\linewidth]{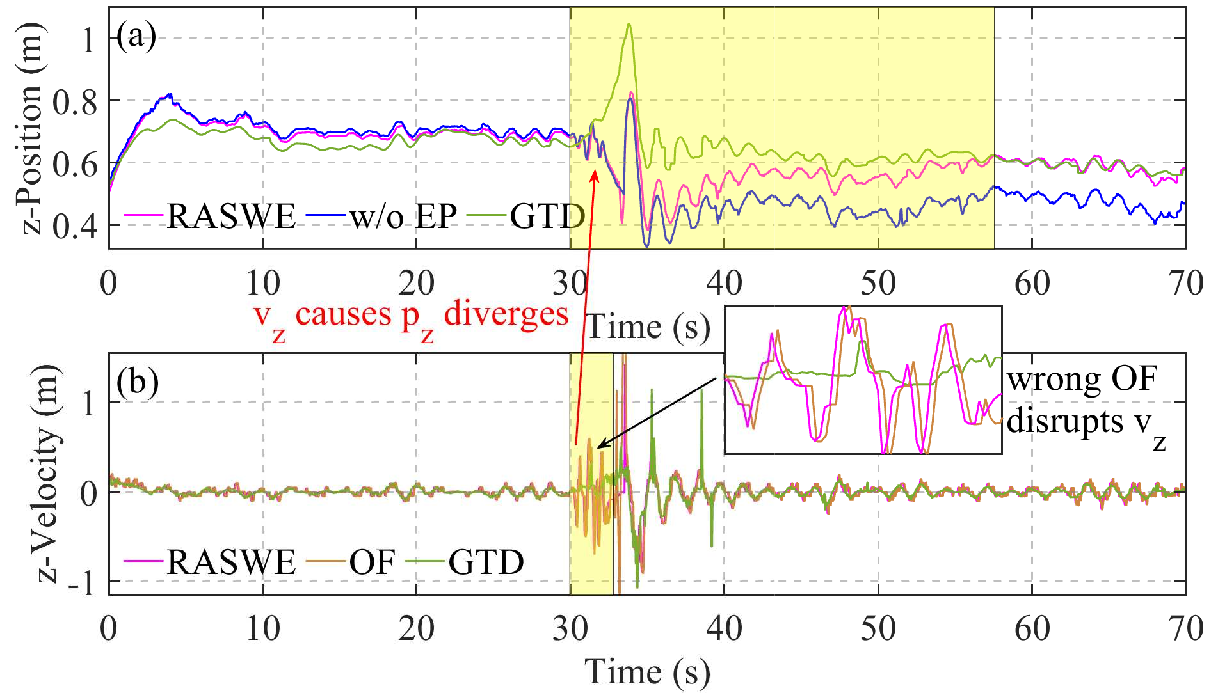}
    \vspace{-4pt}
    \caption{Position and velocity estimation of RASWE on dataset $\text{Y}_1$ with sudden rapid fluctuations, where "GTD" means ground truth data. Subfigure (a) is a comparison along the z-axis; subfigure (b) shows the reason for the divergence and recovery of position estimation.}
    \label{fig:yan0.4pvz}
\end{figure}
Aerial drag adjustment is also affected at the beginning of sudden fluctuations since the calculation of its gradient \eqref{34derivative} relies directly on velocity estimation.
The changing tendency of aerial drag adjustment on dataset $\text{Y}_1$ is illustrated in subfigure (a) of Fig.~\ref{fig:yan0.4mdpe}.
A sharp increasing curve emerges at first due to fluctuations, but soon later, it is replaced by a horizontal line.
This is because those wrong adaptive adjustments are stopped by error propagation restriction, and subfigure (b) shows how parameters respond to fluctuations.
Soon after rapid fluctuations, adjustments continue and converge again.
Notably, the details of the convergent line show slight oscillation at the micro-scale even if it seems horizontal in subfigure (a), which is different from the horizontal line resulting from error propagation restriction.\par
The error propagation matrix acutely perceives anomalous estimation.
As is shown in subfigure (b) of Fig.~\ref{fig:yan0.4mdpe}, the diagonal entry corresponding to z-velocity, $E_{66}$, responds to sudden fluctuations. 
However, the entry corresponding to the z-position, $E_{33}$, remains almost unchanged.
That is because position estimation is based on velocity estimation via integral, and the calculation process is normal, although the results are misleading. 
This shows the proficiency of $\boldsymbol{E}$ in \eqref{33errpro} to serve as an online error inspector.
The parameters in \eqref{33exploitE} are also displayed in subfigure (b), where the average trace (abbr. AveTrc) is more sensitive to slight interference, and the reduced determinant (abbr. RedDet) focuses on significant fluctuations more.
This proves effective in merging these two parameters in error propagation restriction.
\begin{figure}[htbp]
    \centering
    \vspace{-4pt}
    \includegraphics[width=0.98\linewidth]{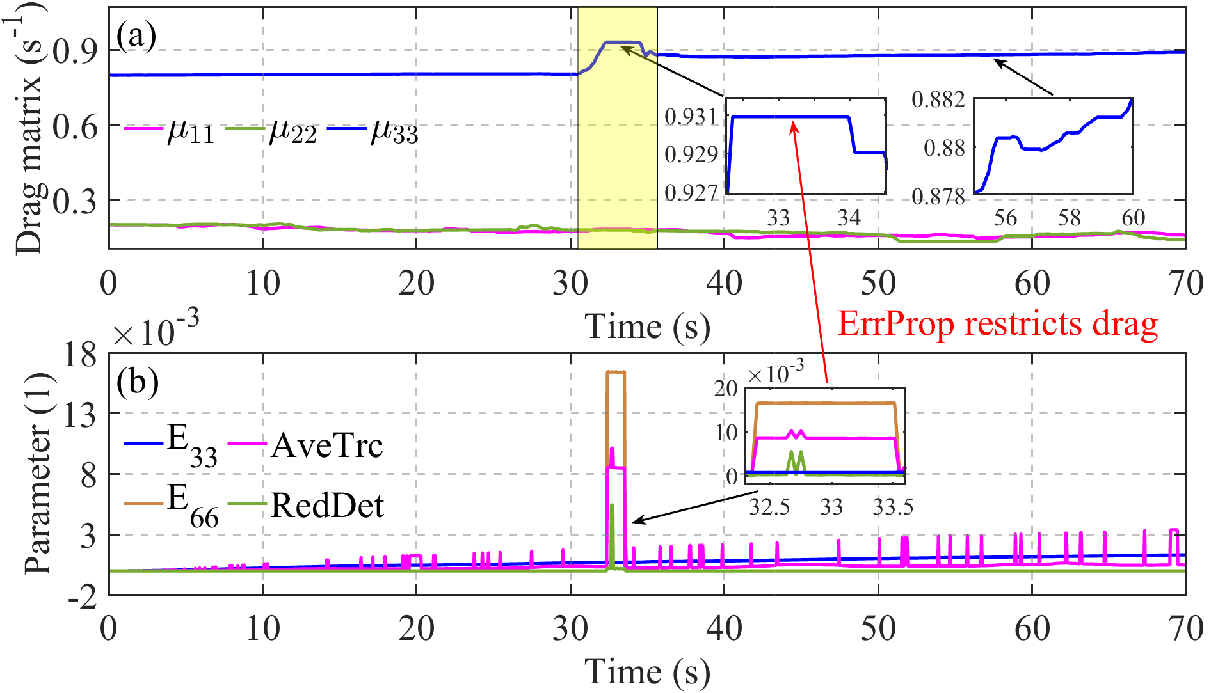}
    \vspace{-4pt}
    \caption{Error propagation restriction prevents aerial drag adjustment from divergence (dataset $\text{Y}_1$). Subfigure (a) illustrates the changing tendency of aerial drag coefficients.
    Subfigure (b) displays the changing tendency of parameters in \eqref{33exploitE} and diagonal entry of the error propagation matrix in \eqref{33errpro}.}
    \label{fig:yan0.4mdpe}
\end{figure}
The second restriction is coherence restriction, which exploits redundant information due to overlapping sliding windows, as illustrated in Fig.~\ref{fig:SW}.
We remove it by simply omitting the augmentation step for KF \eqref{32yCR1}--\eqref{32yCR3}.
Our estimator relies on coherence restriction to inherit historical information since the sliding window size is 10 in our experiments, which is typically too small to perform proper estimation.
Hence, as is shown in Table~\ref{tab:overall}, every estimation has a significant error if the coherence restriction is canceled.\par
To keep numerical stability, we apply one additional restriction called consistency restriction.
When RASWE begins in every new sliding window, we re-initialize $\Check{\boldsymbol{P}}_0=0.1\cdot\boldsymbol{I}_6$. We keep all $\boldsymbol{Q}$'s and $\tilde{\boldsymbol{R}}$'s the same for every timestep, which is obtained by Algorithm~\ref{algIWS} in the last sliding window.
To cancel this restriction, we use a matrix sequence of size $k_w+1$ to record historical covariance matrices $\boldsymbol{P}$'s, $\boldsymbol{Q}$'s and $\boldsymbol{R}$'s.
In this way, all the noise covariance matrices at each timestep differ.
Hence, the estimator becomes more numerically sensitive, increasing the divergence probability.
This counts for the results that diverge badly.
On the other hand, since the sliding window size is quite small, it is rational to assume all covariance matrices are approximately the same.\par
The conclusion is that coherence and consistency restrictions are indispensable for our estimator. 
Error propagation restriction is essential to handle harsh environments, significantly suppressing divergence.
\subsection{Verification of Online Estimation and Control}
To verify our localization algorithm in real-world control applications, we additionally conduct experiments in the same laboratory environment, with the controller in \cite{10947092} to steer the UAV instead of manual control.
The target trajectory is a circle of a 1-meter radius executed at varying speeds.
All control experiments employ the same parameter settings as those listed in Table~\ref{tab:param} for consistency.
However, to account for the potential increase in sensor noise that may occur during the autonomous take-off and landing procedures, the tolerance for sensor noise is relaxed and the malfunction detection threshold in \eqref{33bool} is accordingly raised.
Moreover, we directly leverage the height from the laser unit in the OF sensor to calibrate the z-position, rather than employing the differencing method to estimate z-velocity as done previously.
This approach mitigates the additional errors that can arise from numerical differentiation and avoids the phase lag introduced by filtering differentiation noise, both of which could adversely affect control performance.\par
\begin{table}[tbp]
    \centering
    \vspace{-10pt}
    \setlength{\tabcolsep}{2.35mm}
    \caption{Performance Evaluation of Control and Estimation of Different Control Speed}
    \begin{tabularx}{0.98\linewidth}{ccc}
        \toprule
        \textbf{Speed} (m/s) & \textbf{Estimation RMSE} (meter) & \textbf{Control RMSE} (meter)\\
        \midrule
        0.20 & 0.32 & 0.11 \\
        0.35 & 0.45 & 0.13 \\
        0.60 & 0.44 & 0.14 \\
        \bottomrule
    \end{tabularx}
    \label{tab:control}
\end{table}

\begin{figure}[htbp]
    \centering
    \includegraphics[width=0.98\linewidth]{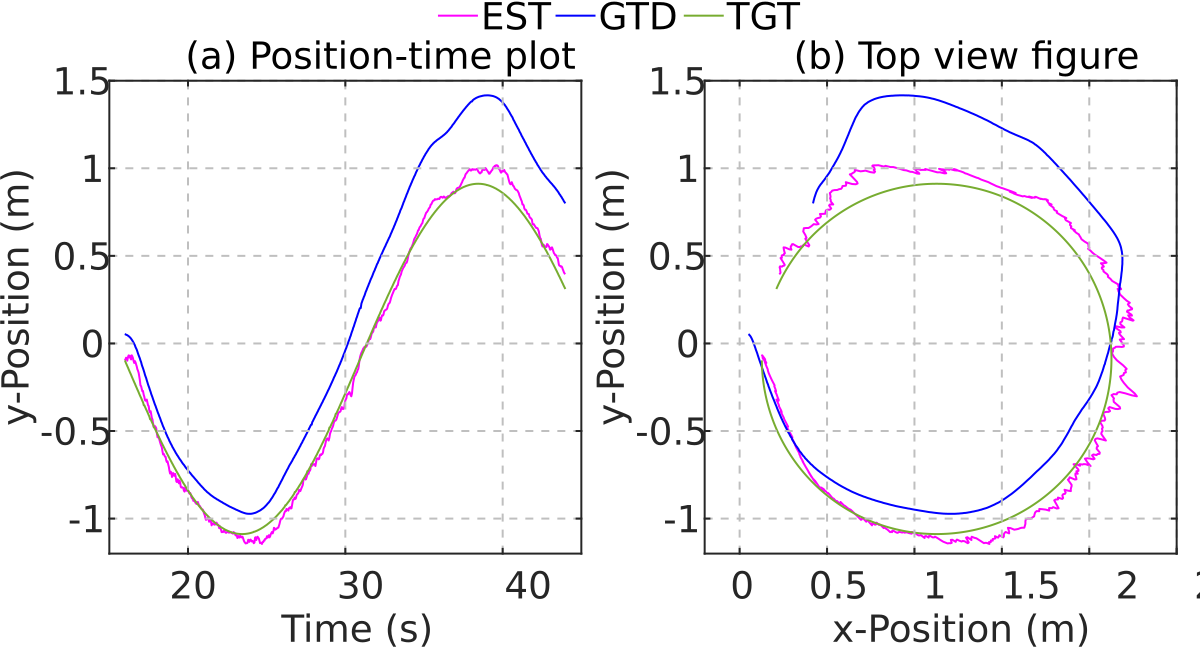}
    \vspace{-4pt}
    \caption{Illustration of estimated trajectory (EST), ground truth trajectory (GTD) and target trajectory (TGT) with speed 0.2 m/s.}
    \label{fig:control}
\end{figure}
The results are shown in Table~\ref{tab:control}. The estimation and control RMSEs here are calculated by:
\begin{subequations}
\begin{align}
\text{Estimation RMSE} &= \sqrt{\frac{1}{n}\sum_{i=1}^{n}||\boldsymbol{p}_{i,e}-\boldsymbol{p}_{i,g}||^2_2} \\
\text{Control RMSE} &= \sqrt{\frac{1}{n}\sum_{i=1}^{n}||\boldsymbol{p}_{i,e}-\boldsymbol{p}_{i,t}||^2_2}
\end{align}
\end{subequations}
where $\boldsymbol{p}_{i,e}$, $\boldsymbol{p}_{i,g}$ and $\boldsymbol{p}_{i,t}$ are estimated, ground truth and target position, respectively.
The results indicate that, despite the controller exhibiting errors of a decimeter, the estimator can still deliver satisfactory online estimation accuracy.
As shown in Fig.~\ref{fig:control}, the estimated trajectory captures the overall trend of the target trajectory, while the actual trajectory remains affected by drift, especially along the y-axis.
This drift may be stemmed from the controller’s limited ability to compensate for environmental spatial heterogeneity.
On the other hand, the characteristics of single anchor UWB is that it usually only calibrates one dimension, as discussed before, which may further contribute to the observed precision degradation in the y-direction.
Overall, these experiments sufficiently demonstrate the capability of RASWE to support real-world closed-loop control applications.

\subsection{Simulation of Covariance and Aerial Drag Estimation}
Since the ground truth of noise covariance and aerial drag matrices is inaccessible in real-world experiments, we perform simulations in MATLAB\textsuperscript{\textregistered} to provide a direct numerical evaluation of their estimation.
All parameters in the left column of Table~\ref{tab:param} remain unchanged. For the initial values in the right column, $\check{\boldsymbol{P}}_0$, $\hat{\phi}_0$, and $\hat{\psi}_0$ are preserved, while all others are initialized with their ground truth values. The UWB anchor is placed at $(0, 0, 0)$, and the UAV starts from $(1, 0, 0.2)$ at rest.
The simulation runs for $2000$ timesteps with an interval $\text{dt} = 0.04$, and the first additional $20$ timesteps are used to initialize the estimator.
The actual acceleration $\boldsymbol{i}_k$ in \eqref{31motionModel} is defined as follows:
\begin{equation}
    \label{45acc}
    \boldsymbol{i}_k = \begin{bmatrix}
        -\pi\sin(k\cdot\text{dt}/12)/2.4\\
        \pi\cos(k\cdot\text{dt}/12)/2.4\\
        0.05\cos(k\cdot\text{dt}/24)
    \end{bmatrix}-\boldsymbol{v}_{k-1}
\end{equation}
where $k=1,2,\cdots$ is the current timestep.
The actual aerial drag matrix $\boldsymbol{\mu}_k$ is set as $\text{diag}(1+0.03\sin\frac{k\pi}{200},1+0.03\sin\frac{k\pi}{250},1+0.03\sin\frac{k\pi}{225})$.
The actual covariance matrices are defined as follows:
\begin{subequations}
\label{45cov}
\begin{align}
    \boldsymbol{Q}_k&=\frac{10+9\sin\frac{k\pi}{275}}{2500}(\text{diag}(7,3,1,4,9,1)+\boldsymbol{\Lambda}_6)\label{45cov1}\\
    \boldsymbol{R}_k&=\frac{1.5+1.2\sin\frac{k\pi}{325}}{2000}(\text{diag}(9,5,4,1)+\boldsymbol{\Lambda}_4)\label{45cov2}
\end{align}
\end{subequations}
where $\boldsymbol{\Lambda}_n$ represents an $n \times n$ matrix with entries $\Lambda(i,j) = 0.1$ if $i+j$ is even, and $0.2$ otherwise.
The simulation is repeated 100 times with different random seeds.\par
As discussed previously, the augmented Kalman filter \eqref{32KFforward} combined with the backward smoother \eqref{33backward} is mathematically equivalent to a MAP estimator \cite{roboEst}, where the covariance matrices serve as relative weights in a least-squares problem.
Since scaling both sides of the equation by the same non-zero factor does not affect the solution, our focus is only on the relative magnitudes, i.e., the weight distribution.
We utilize the "softmax" operation to form this weight distribution, normalizing each entry into the probability, for example, $\text{Prob}[Q(ij)]=\exp Q(ij)/ \sum_{m=1}^{6}\sum_{n=1}^{6}\exp Q(mn)$.
Since the diagonal of covariance is more significant for the estimator, we additionally evaluate the diagonal weight distribution.\par
In this way, the challenge of assessing the covariance similarity is reformulated as evaluating the "distance" between two weight distributions.
Hence, we employ Kullback–Leibler divergence (KLD) \cite{kullback1951information}:
\begin{equation}
    \label{45kld}
    D_{\text{KL}}(p||q)=\sum_{x\in\chi}P(x)\ln(p(x)/q(x))
\end{equation}
where $p$ and $q$ are two distributions with shared sample space $\chi$.
For instance, $D_{\text{KL}}(\mathrm{N}(0,1)||\mathrm{N}(0,2)) \simeq 0.3181$, while $D_{\text{KL}}(\mathrm{N}(0,0.99)||\mathrm{N}(0,1.01))\simeq3.947\times10^{-4}$.
The mean KLD values of $\boldsymbol{Q}$ and $\boldsymbol{R}$, for both diagonals and full matrices, averaged over all 2000 timesteps and 100 simulations, are reported in Table~\ref{tab:evalQRD}.
To ensure consistency, the simulated tasks are designed with a difficulty level comparable to the real-world experiments, yielding RMSEs similar to $\text{B}_1$--$\text{B}_3$ in Table~\ref{tab:overall}.
The results show that the estimated weight distributions of both $\boldsymbol{Q}$ and $\boldsymbol{R}$ reasonably approximate the true distributions, with $\boldsymbol{R}$ achieving closer alignment.
This confirms our method’s effectiveness in capturing the statistical characteristics of noise and in finding appropriate weights to balance prior knowledge with sensor measurements.\par
For aerial drag estimation, we evaluate the relative error between the estimated value $\hat{\mu}_{t}(ii)$ and true value $\bar{\mu}_{t}(ii)$ of the main diagonal, since the actual off-diagonal elements are zero and thus not suitable for ratio-based comparison.
The relative RMSE for a single simulation is computed as: $(\sum_{t=0}^{2000}\sum_{i=1}^3\frac{\hat{\mu}_{t}(ii)-\bar{\mu}_{t}(ii)}{\bar{\mu}_{t}(ii)}\times100\%)^{0.5}$, 
which is then averaged over $100$ simulations.
As shown in Table~\ref{tab:evalQRD}, the relative RMSE remains at the percentage level, further demonstrating the proposed method’s ability to effectively model aerial drag dynamics.
These simulation results suggest that our method reasonably captures the statistical properties of time-varying noise and models aerial drag effects with acceptable accuracy, consistent with the findings from our real-world experiments.
\begin{table}[htbp]
    \centering
    \vspace{-10pt}
    \setlength{\tabcolsep}{7.2mm}
    \caption{Evaluation of Covariance and Aerial Drag Estimation in Simulation}
    \begin{tabularx}{0.98\linewidth}{lc}
        \toprule
        \multicolumn{1}{c}{\textbf{Item}}& \textbf{Average Value}\\
        \midrule
        RMSE of position estimation & $0.13824$ \\
        $D_{\text{KL}}$ of $\boldsymbol{Q}$'s diagonal entries & $3.245\times10^{-3}$ \\
        $D_{\text{KL}}$ of $\boldsymbol{Q}$'s whole matrix & $5.899\times10^{-3}$ \\
        $D_{\text{KL}}$ of $\boldsymbol{R}$'s diagonal entries & $2.537\times10^{-4}$ \\
        $D_{\text{KL}}$ of $\boldsymbol{R}$'s whole matrix & $3.136\times10^{-4}$ \\
        Aerial drag relative RMSE & $6.492$\% \\
        \bottomrule
    \end{tabularx}
    \label{tab:evalQRD}
\end{table}

\section{Conclusion}\label{secConc}
In this paper, we propose a restricted adaptive sliding window estimator using a single anchor for UAV positioning in harsh environments, which simultaneously estimates the states, noise covariance matrices and aerial drag.
Our work differs from existing adaptive estimators in two ways.
First, we introduce an error propagation matrix to assess estimation performance online, and we accordingly impose inherent restrictions to mitigate potential divergence caused by sensor malfunctions.
Secondly, we develop an adaptive aerial drag estimator to adjust the motion model dynamically, enhancing overall performance. 
Experiments validate the effectiveness of the estimator. 
Our proposed method achieves an average RMSE of 0.17m in common environments and 0.39m in harsh environments, outperforming the state-of-the-art.
The covariance estimation strongly tracks the statistic features of dynamics and observation model noises, which overcomes the asymmetry of single anchor estimation.
The error propagation restriction demonstrates its proficiency in suppressing divergence when handling abrupt changes like sharp turns and sudden trajectory fluctuations.
The real-world close-loop control experiments further underscore the estimator's competence in practical tasks.\par
In the future, we will enhance the adaptability of the estimator, considering lumped vibrations from the flight controller.

\bibliography{IEEEabrv,bibliography}
\bibliographystyle{IEEEtran}

\newpage

\begin{IEEEbiography}
[{\includegraphics[width=1in,height=1.25in,clip,keepaspectratio]{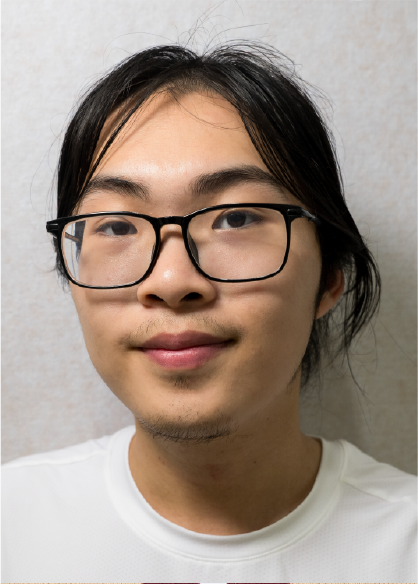}}] 
{Kaiwen Xiong} is currently pursuing the undergraduate degree in automation at Shanghai Jiao Tong University, Shanghai, China. \par
He is currently with the State Key Laboratory of Mechanical System and Vibration, School of Mechanical Engineering. His research interests include state estimation and adaptive positioning of unmanned systems.
\end{IEEEbiography}

\vspace{-10cm} 

\begin{IEEEbiography}
[{\includegraphics[width=1in,height=1.25in,clip,keepaspectratio]{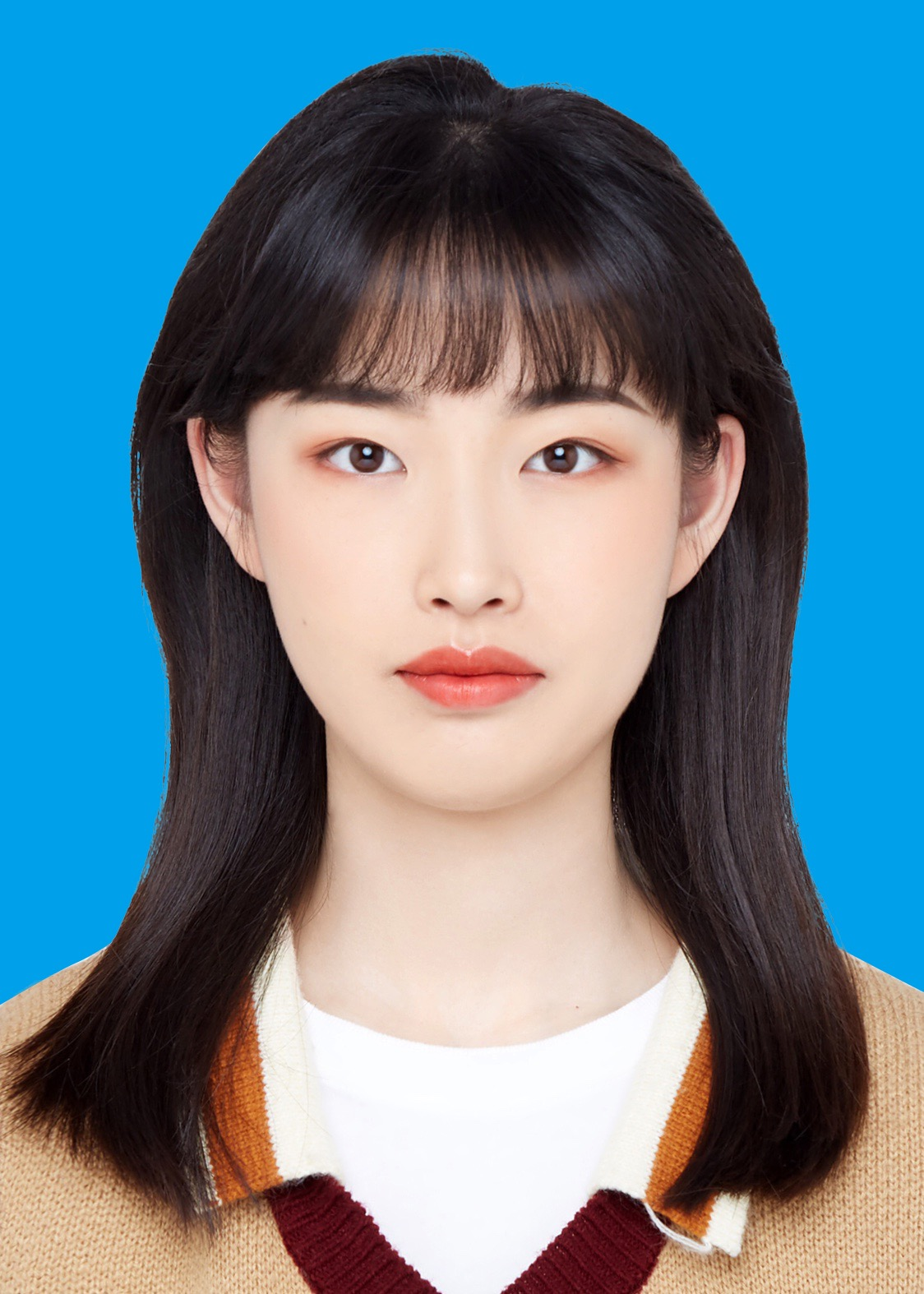}}] 
{Sijia Chen} received the B.S. degree in mechanical design manufacture and automation from the University of Electronic Science and Technology of China, Sichuan, China, in 2022. She is currently a Ph.D. candidate with the State Key Laboratory of Mechanical System and Vibration, School of Mechanical Engineering, Shanghai Jiao Tong University.\par
Her research interests include state estimation and intelligent control of unmanned systems.
\end{IEEEbiography}

\vspace{-10cm} 

\begin{IEEEbiography}[{\includegraphics[width=1in,height=1.25in,clip,keepaspectratio]{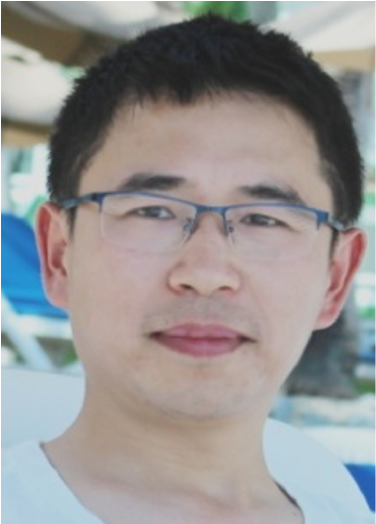}}]  
{Wei Dong} received the B.S. degree and Ph.D. degree in mechanical engineering from Shanghai Jiao Tong University, Shanghai, China, in 2009 and 2015, respectively.\par 
He is currently an associate professor in the Robotic Institute, School of Mechanical Engineering, Shanghai Jiao Tong University. For years, his research group was champions in several national-wide autonomous navigation competitions of unmanned aerial vehicles in China. In 2022, he was selected into the Shanghai Rising-Star Program for distinguished young scientists. His research interests include cooperation, perception and agile control of unmanned systems.
\end{IEEEbiography}

\end{document}